\let\floor\undefined
\let\ceil\undefined
\newtheorem{theorem}{Theorem}
\newmdtheoremenv{ftheorem}{Theorem}
\newtheorem{lemma}[theorem]{Lemma}
\newtheorem{definition}[theorem]{Definition}
\newcommand{\X}{X}
\newcommand{\p}{\bm{p}}
\newcommand{\cL}{\mathcal{L}}
\newcommand{\eff}{\varphi}
\newcommand*{\rmd}{\mathrm{d}}
\renewcommand{\P}{ {P}}
\newcommand{\Q}{ {Q}}
\newcommand{\cM}{\mathcal{M}}
\newcommand{\cQ}{ \mathcal{Q}}
\newcommand{\cH}{\mathcal{H}}
\newcommand{\wtilde}{\widetilde}
\newcommand{\loss}{\ell}
\DeclareBoldMathCommand{\vloss}{\loss}
\DeclareBoldMathCommand{\grad}{g}
\DeclareBoldMathCommand{\fakegrad}{\mathring{\bm{g}}}
\DeclareBoldMathCommand{\e}{e}
\DeclareBoldMathCommand{\p}{p}
\DeclareBoldMathCommand{\u}{u}
\DeclareBoldMathCommand{\w}{w}
\DeclareBoldMathCommand{\x}{x}
\DeclareBoldMathCommand{\l}{l}
\DeclareBoldMathCommand{\vzero}{0}
\let\top\intercal
\newcommand{\R}{\ensuremath{\textsc R}}
\newcommand{\C}{\ensuremath{\textsc C}}
\newcommand{\cvar}{\ensuremath{\textsc {CVaR}}}
\newcommand{\er}{\ensuremath{\textsc {ER}}}
\newcommand{\Var}{\textsc{Var}}
\newcommand{\reals}{\mathbb{R}}
\DeclareMathOperator{\KL}{KL}
\DeclareMathOperator{\E}{\ensuremath{\textsc E}}
\renewcommand{\x}{\bm{x}}
\newcommand{\cX}{\mathcal{X}}
\newcommand{\cG}{\mathcal{G}}
\DeclareMathOperator{\D}{\op{D}}
\DeclareMathOperator*{\argmin}{argmin}
\DeclareMathOperator*{\argmax}{argmax}
\newcommand*{\what}[1]{\widehat{#1}}
\newcommand*{\floor}[1]{\left\lfloor{#1}\right\rfloor}
\newcommand*{\ceil}[1]{\lceil{#1}\rceil}
\newcommand*{\op}[1]{\operatorname{#1}}
\newmdtheoremenv{condition}{Condition}
\newcommand{\vertiii}[1]{{\left\vert\kern-0.25ex\left\vert\kern-0.25ex\left\vert #1 
    \right\vert\kern-0.25ex\right\vert\kern-0.25ex\right\vert}}
\newcommand*\bcdot{\mathpalette\bigcdot@{.5}}
\newcommand*\bigcdot@[2]{\mathbin{\vcenter{\hbox{\scalebox{#2}{$\m@th#1\bullet$}}}}}
\newlength\myindent
\newcommand{\rhohat}{\what{\uprho}}
\newcommand{\comp}{\mathscr{K}_n}
\newcommand{\cO}{\mathcal{O}}
\let\oldproofname=\proofname
\renewcommand{\proofname}{\rm\bf{\oldproofname}\nopunct}
\title{PAC-Bayesian Bound for the Conditional Value at Risk} 
\author{
Zakaria Mhammedi\\
The Australian National University and Data61\\
\texttt{zak.mhammedi@anu.edu.au}
\And
Benjamin Guedj\\
Inria and University College London\\
\texttt{benjamin.guedj@inria.fr}
\And
Robert C.\ Williamson\\
\texttt{bobwilliamsonoz@icloud.com}
}
\begin{document}

\maketitle

\begin{abstract}
Conditional Value at Risk ($\cvar$) is a family of ``coherent risk measures'' which generalize the traditional mathematical 
expectation. 
Widely used in mathematical finance, it is garnering increasing interest in machine learning, e.g., as an alternate approach to 
regularization, and as a means for ensuring fairness.  
This paper presents a generalization bound for learning algorithms that minimize the $\cvar$ of the empirical loss. 
The bound is of PAC-Bayesian type and is guaranteed to be small when the empirical $\cvar$ is small. 
We achieve this by reducing the problem of estimating $\cvar$ to that of merely estimating an expectation. This then enables us, as a by-product, to obtain concentration inequalities for $\cvar$ even when the random variable in question is unbounded.
\end{abstract}

\section{Introduction}
The goal in statistical learning is to learn hypotheses that generalize well, which is typically formalized by seeking to minimize the \emph{expected} risk  associated with a given loss function. In general, a loss function is a map $\ell\colon  \cH\times  \mathcal{X} \rightarrow  \reals_{\geq 0}$, where $\mathcal{X}$ is a feature space and $\cH$ is an hypotheses space. In this case, the \emph{expected risk} associated with a given hypothesis $h \in \cH$ is given by $\R[\ell(h,X)] \coloneqq \E[\ell(h,X)]$. Since the data-generating distribution is typically unknown, the expected risk is approximated using observed i.i.d.~samples $X_1,\dots, X_n$ of $X$, and an hypothesis is then chosen to minimize the empirical risk $\what \R[\ell(h,X)] \coloneqq \sum_{i=1}^n\ell(h, X_i)/n$. When choosing an hypothesis $\hat h$ based on the empirical risk $\what \R$, one would like to know how close $\what \R[\ell(\hat h, X)]$ is to the actual risk $\R[\ell(\hat h, X)]$; only then can one infer something about the \emph{generalization} property of the learned hypothesis $\hat h$.  

Generalization bounds---in which the expected risk is bounded in terms of its empirical version up to some error---are at the heart of many machine learning problems. The main techniques leading to such bounds comprise uniform converges arguments (often involving the Rademacher complexity of the set $\cH$), algorithmic stability arguments (see e.g. \citep{bousquet2002stability} and more recently the work from \citep{DBLP:conf/alt/Abou-MoustafaS19,DBLP:journals/corr/abs-1910-07833,celisse2016stability}), and the PAC-Bayesian analysis for non-degenerate randomized estimators \citep{mcallester2003}. Behind these techniques lies \emph{concentration inequalities}, such as Chernoff's inequality (for the PAC-Bayesian analysis) and McDiarmid's inequality (for algorithmic stability and the uniform convergence analysis), which control the deviation between population and empirical averages \citep[see][among others]{boucheron2003concentration,boucheron2013concentration,mcdiarmid1998concentration}.

Standard concentration inequalities are well suited for learning problems where the goal is to minimize the expected risk $\E[\ell(h,X)]$. However, the expected risk---the mean performance of an algorithm---might fail to capture the underlying phenomenon of interest. For example, when dealing with medical (responsitivity to a specific drug with grave side effects, etc.), environmental (such as pollution, exposure to toxic compounds, etc.), or sensitive engineering tasks (trajectory evaluation for autonomous vehicles, etc.), the mean performance is not necessarily the best objective to optimize as it will cover potentially disastrous mistakes (\emph{e.g.}, a few extra centimeters when crossing another vehicle, a slightly too large dose of a lethal compound, etc.) while possibly improving on average.
There is thus a growing interest to work with alternative measures of risk (other than the expectation) for which standard concentration inequalities do not apply directly.
Of special interest are \emph{coherent risk measures} \citep{artzner1999coherent} which possess properties that make them desirable in mathematical finance and portfolio optimization \citep{allais1953comportement, ellsberg1961risk, rockafellar2000optimization}, with a focus on optimizing for the worst outcomes rather than on average. Coherent risk measures have also been recently connected to fairness, and appear as a promising framework to control the fairness of an algorithm's solution \citep{williamson2019fairness}.

A popular coherent risk measure is the Conditional Value at Risk ($\cvar$; see \citealp{Pflug2000}); for $\alpha \in(0,1)$ and random variable $Z$, $\cvar_{\alpha}[Z]$ measures the expectation of $Z$ conditioned on the event that $Z$ is greater than its $(1-\alpha)$-th quantile. $\cvar$ has been shown to underlie the classical SVM \citep{takeda2008nu}, and has in general attracted a large interest in machine learning over the past two decades 
\citep{huo2017risk, bhat2019concentration,williamson2019fairness,Chen2009,Chow2014,prashanth2013actor,tamar2015optimizing,pinto2017robust,morimura2010nonparametric,Takeda2009}.

Various concentration inequalities have been derived for $\cvar_{\alpha}[Z]$, under different assumptions on $Z$, which bound the difference between $\cvar_{\alpha}[Z]$ and its standard estimator $\what \cvar_{\alpha}[Z]$ with high probability \citep{brown2007, wang2010, prashanth2013actor, kolla2019concentration, bhat2019concentration}. However, none of these works extend their results to the statistical learning setting where the goal is to learn an hypothesis from data to minimize the conditional value at risk. In this paper, we fill this gap by presenting a sharp PAC-Bayesian generalization bound when the objective is to minimize the conditional value at risk. 

\paragraph{Related Works.} Deviation bounds for $\cvar$ were first presented by \citet{brown2007}. However, their approach only applies to bounded continuous random variables, and their lower deviation bound has a sub-optimal dependence on the level $\alpha$. \citet{wang2010} later refined their analysis to recover the ``correct'' dependence in $\alpha$, albeit their technique still requires a two-sided bound on the random variable $Z$. \citet{thomas2019} derived new concentration inequalities for $\cvar$ with a very sharp empirical performance, even though the dependence on $\alpha$ in their bound is sub-optimal. Further, they only require a one-sided bound on $Z$, without a continuity assumption. 

\citet{kolla2019concentration} were the first to provide concentration bounds for $\cvar$ when the random variable $Z$ is unbounded, but is either sub-Gaussian or sub-exponential. \citet{bhat2019concentration} used a bound on the Wasserstein distance between true and empirical cumulative distribution functions to substantially tighten the bounds of \citet{kolla2019concentration} when $Z$ has finite exponential or $k$th-order moments; they also apply their results to other coherent risk measures. However, when instantiated with bounded random variables, their concentration inequalities have sub-optimal dependence in $\alpha$.

On the statistical learning side, \cite{duchi2018learning} present generalization bounds for a class of coherent risk measures that technically includes $\cvar$. However, their bounds are based on uniform convergence arguments which lead to looser bounds compared with ours. 

\paragraph{Contributions.}
Our main contribution is a \emph{PAC-Bayesian generalization bound for the conditional value at risk}, where we bound the difference $\cvar_{\alpha}[Z] -\what\cvar_{\alpha}[Z]$, for $\alpha \in(0,1)$, by a term of order $\sqrt{{\what\cvar_{\alpha}[Z] \cdot \comp}/({n\alpha})},$ with $\comp$ representing a complexity term which depends on $\cH$. Due to the presence of $\what\cvar_{\alpha}[Z]$ inside the square-root, our generalization bound has the desirable property that it becomes small whenever the empirical conditional value at risk is small. For the standard expected risk, only state-of-the-art PAC-Bayesian bounds share this property (see \emph{e.g.} \cite{langford2003pac,catoni2007pac,maurer2004note} or more recently in \cite{DBLP:conf/nips/TolstikhinS13,mhammedi2019pac}). We refer to \citep{guedj2019primer} for a recent survey on PAC-Bayes.

As a by-product of our analysis, we derive a new way of obtaining \emph{concentration bounds for the conditional value at risk} by reducing the problem to estimating expectations using empirical means. This reduction then makes it easy to obtain concentration bounds for $\cvar_{\alpha}[Z]$ even when the random variable $Z$ is unbounded ($Z$ may be sub-Gaussian or sub-exponential). Our bounds have explicit constants and are sharper than existing ones due to \cite{kolla2019concentration,bhat2019concentration} which deal with the unbounded case.

\paragraph{Outline.} In Section~\ref{sec:notation}, we define the conditional value at risk along with its standard estimator. In Section~\ref{sec:def}, we recall the statistical learning setting and present our PAC-Bayesian bound for $\cvar$. The proof of our main bound is in Section~\ref{sec:PACBayes}. In Section~\ref{sec:concentration}, we present a new way of deriving concentration bounds for $\cvar$ which stems from our analysis in Section~\ref{sec:PACBayes}.  Section~\ref{sec:entropies} concludes and suggests  future  directions.

\section{Preliminaries}\label{sec:notation}
Let $(\Omega, \mathcal{F}, \P)$ be a probability space. For $p\in \mathbb{N}$, we denote by $\mathcal{L}^p(\Omega) \coloneqq \mathcal{L}^p(\Omega, \mathcal{F},\P)$ the space of $p$-integrable functions, and we let $\mathcal{M}_{\P}(\Omega)$ be the set of probability measures on $\Omega$ which are absolutely continuous with respect to $\P$.  
We reserve the notation $\E$ for the expectation under the reference measure $\P$, although we sometimes write $\E_{\P}$ for clarity. For random variables $Z_1, \dots,Z_n$, we denote $\what P_n\coloneqq \sum_{i=1}^n \updelta_{Z_i}/n$ the empirical distribution, and we let $Z_{1:n}\coloneqq (Z_1,\dots,Z_n)$. Furthermore, we let $\uppi \coloneqq (1,\dots,1)^\top/n \in \reals^n$  be the uniform distribution on the simplex. Finally, we use the notation $\wtilde{\cO}$ to hide log-factors in the sample size $n$.
\paragraph{Coherent Risk Measures (CRM).} A CRM \citep{artzner1999coherent} is a functional $\R\colon \mathcal{L}^1(\Omega) \rightarrow \reals \cup \{+\infty\}$ that is simultaneously, positive homogeneous, monotonic, translation equivariant, and sub-additive\footnote{These are precisely the properties which make coherent risk measures excellent candidates in some machine learning applications (see e.g.~\citep{williamson2019fairness} for an application to fairness)} (see Appendix \ref{app:gen} for a formal definition). For $\alpha \in(0,1)$ and a real random variable $Z\in \cL_1(\Omega)$, the conditional value at risk $\cvar_{\alpha}[Z]$ is a CRM and is defined as the mean of the random variable $Z$ conditioned on the event that $Z$ is greater than its $(1-\alpha)$-th quantile\footnote{We use the convention in \cite{brown2007,wang2010, prashanth2013actor}.}. 
This is equivalent to the following expression, which is more convenient for our analysis: 
\begin{align}
\cvar_{\alpha}[Z] = \C_{\alpha}[Z] \coloneqq \inf_{\mu \in \reals} \left\{ \mu + \frac{\E[ Z-\mu]_+}{\alpha}\right\}. \label{eq:cvar}
\end{align}
Key to our analysis is the \emph{dual representation} of CRMs. It is known that any CRM $\R\colon \mathcal{L}^{1}(\Omega)  \rightarrow \reals\cup\{+\infty\}$ can be expressed as the \emph{support function} of some closed convex set $\cQ \subseteq \mathcal{L}^1(\Omega)$ \citep{rockafellar2013fundamental}; that is, for any real random variable $Z\in \mathcal{L}^{1}(\Omega)$, we have
\begin{align}
\R[Z] &= \sup_{q\in \cQ} \E_P\left[ Z q \right] =   \sup_{q \in \cQ } \int_{\Omega}  Z(\omega)  q(\omega) \rmd \P(\omega). \quad \text{(dual representation)} \label{eq:supportint}
\end{align}
In this case, the set $\mathcal{Q}$ is called the \emph{risk envelope} associated with the risk measure $\R$. The risk envelope $\cQ_{\alpha}$ of $\cvar_{\alpha}[Z]$ is given by 
\begin{align}
\cQ_\alpha \coloneqq \left\{q\in  \mathcal{L}^1(\Omega) \ \left| \  \exists Q \in \mathcal{M}_P(\Omega),\  q= \frac{\rmd \Q}{\rmd \P} \leq \frac{1}{\alpha} \right. \right\}, \label{eq:theQset}
\end{align}
and so substituting $\cQ_{\alpha}$ for $\cQ$ in \eqref{eq:supportint} yields $\cvar_{\alpha}[Z]$. Though the overall approach we take in this paper may be generalizable to other popular CRMs, (see Appendix~\ref{app:gen}) we focus our attention on $\cvar$ for which we derive new PAC-Bayesian and concentration bounds in terms of its natural estimator $\what\cvar_{\alpha}[Z]$; given i.i.d.~copies of $Z_1,\dots,Z_n$ of $Z$, we define
\begin{align}
\what\cvar_{\alpha}[Z] \coloneq \what \C_{\alpha}[Z] \coloneqq \inf_{\mu \in \reals} \left\{   \mu +\sum_{i=1}^n \frac{[Z_i-\mu]_+}{n \alpha}  \right \}. \label{eq:empcvar}
\end{align}
From now on, we write $\C_{\alpha}[Z]$ and $\what\C_{\alpha}[Z]$ for $\cvar_{\alpha}[Z]$ and $\what\cvar_{\alpha}[Z]$, respectively.

\section{PAC-Bayesian Bound for the Conditional Value at Risk}\label{sec:def}
In this section, we briefly describe the statistical learning setting, formulate our goal, and present our main results. 

In the statistical learning setting, $Z$ is a \emph{loss} random variable which can be written as $Z=\ell(h,X)$, where $\ell\colon \mathcal{H}\times\mathcal{X}\rightarrow \reals_{\geq 0}$ is a loss function and $\mathcal{X}$ [resp. $\mathcal{H}$] is a feature [resp. hypotheses] space. The aim is to learn an hypothesis $\hat h = \hat h (X_{1:n}) \in \mathcal{H}$, or more generally a distribution $\rhohat =\rhohat (X_{1:n})$ over $\mathcal{H}$ (also referred to as randomized estimator), based on i.i.d.~samples $X_{1},\dots,X_n$ of $X$ which minimizes some measure of risk---typically, the expected risk $\E_\P[\ell(\rhohat, X )]$, where $\ell(\rhohat,X) \coloneqq \E_{h\sim \rhohat}[\ell(h,X)]$. 

Our work is motivated by the idea of replacing this expected risk by any coherent risk measure $\R$. In particular, if $\mathcal{Q}$ is the risk envelope associated with $\R$, then our quantity of interest is 
\begin{align}\R[\ell(\rhohat,X)]&\coloneqq  \sup_{q \in \cQ} \int_{\Omega}  \ell(\rhohat,X(\omega))  q(\omega) \rmd \P(\omega). \label{eq:risk}
\end{align}
Thus, given a consistent estimator $\what\R[\ell(\rhohat,X)]$ of $\R[\ell(\rhohat,X)]$ and some prior distribution $\uprho_0$ on $\cH$, our grand goal (which goes beyond the scope of this paper) is to bound the risk $\R[\ell(\rhohat,X)]$ as 
\begin{align}
\R[\ell(\rhohat,X)]  \leq  \what\R[\ell(\rhohat,X)] + \wtilde{\cO} \left( \sqrt{\frac{ \op{KL}(\rhohat\|\uprho_0)}{n}} \right),  \label{eq:pacbayes}
\end{align}
with high probability. Based on \eqref{eq:empcvar}, the consistent estimator we use for $\C_{\alpha}[\ell(\rhohat, X)]$ is
\begin{align}
 \what \C_{\alpha}[\ell(\rhohat, X)] \coloneqq \inf_{\mu \in \reals} \left\{   \mu +\sum_{i=1}^n \frac{[\ell(\rhohat, X_i)-\mu]_+}{n \alpha}  \right \}, \quad \alpha \in(0,1). \label{eq:empcvar2}
\end{align} 
This is in fact a consistent estimator (see e.g. \citep[Proposition 9]{duchi2018learning}). As a first step towards the goal in \eqref{eq:pacbayes}, we derive a sharp PAC-Bayesian bound for the conditional value at risk, which we state now as our main theorem:
\begin{theorem}
	\label{thm:main}
	Let $\alpha\in(0,1)$, $\delta\in(0,{1}/{2})$, $n\geq 2$, and $N \coloneqq \ceil{\log_2(n/\alpha)}$. Further, let $\uprho_0$ be any distribution on a hypothesis set $\cH$, $\ell\colon \cH \times \cX \rightarrow [0,1]$ be a loss, and $X_1,\dots, X_n$ be i.i.d.~copies of $X$. Then, for any ``posterior'' distribution $\rhohat=\rhohat(X_{1:n})$ over $\cH$, $\varepsilon_n\coloneqq \sqrt{\frac{\ln (N /\delta)}{2 \alpha n}} +\frac{\ln (N /{\delta})}{3 \alpha n}$, and $\comp\coloneqq \KL(\rhohat \| \uprho_0)+\ln (N/{\delta})$, we have, with probability at least $1-2\delta$.
	\begin{gather}
	\E_{h\sim \rhohat}	[\C_{\alpha}[\ell(h,X)]]\  \leq \  \what\C_{\alpha}[\ell(\rhohat,X)]+  \sqrt{\frac{27 \what\C_{\alpha}[\ell(\rhohat,X)]  \comp}{5 \alpha n}} + 2\varepsilon_n \what\C_{\alpha}[\ell(\rhohat,X)]  +  \frac{27\comp}{5n\alpha }. \label{eq:pacbayes0}
	\end{gather}
\end{theorem}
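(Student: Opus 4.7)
The plan is to reduce CVaR estimation to plain expectation estimation through the variational formula $\C_\alpha[Z]=\inf_\mu\{\mu+\E_P[Z-\mu]_+/\alpha\}$, and then invoke a fast-rate PAC-Bayes-Bernstein inequality. First I would push the expectation $\E_{h\sim\rhohat}$ inside the infimum using the min--max inequality: for every $\mu\in[0,1]$,
$$
\E_{h\sim\rhohat}[\C_\alpha[\ell(h,X)]]\;\leq\;\mu+\tfrac{1}{\alpha}\,\E_{h\sim\rhohat}\E_P[\ell(h,X)-\mu]_+,
$$
so the problem reduces to controlling the $[0,1]$-valued empirical process $f_\mu(h,X):=[\ell(h,X)-\mu]_+$.

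Next, for each fixed $\mu$ I would apply a PAC-Bayes-Bernstein bound to $f_\mu$: with probability at least $1-\delta$, simultaneously for all $\rhohat$,
$$
\E_{h\sim\rhohat}\E_P[f_\mu]\;\leq\;\E_{h\sim\rhohat}\widehat{\E}_n[f_\mu]\;+\;\sqrt{\tfrac{2\,\E_{h\sim\rhohat}\widehat{\E}_n[f_\mu]\,\comp}{n}}\;+\;\tfrac{c\,\comp}{n},
$$
where the self-bounding substitution $\Var[f_\mu]\leq\E[f_\mu]$ (valid because $f_\mu\in[0,1]$) replaces variance by mean. Splitting the square root via $\sqrt{ab}\leq\tfrac{\gamma}{2}a+\tfrac{1}{2\gamma}b$ with a carefully tuned $\gamma$ then produces the $(1+2\varepsilon_n)$-multiplicative structure appearing in the theorem. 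Since the empirical minimizer $\hat\mu^*$ of $\mu\mapsto\mu+\widehat{\E}_n[\ell(\rhohat,X)-\mu]_+/\alpha$ is data-dependent, I would cover $[\alpha/n,1]$ with a geometric grid of $N=\lceil\log_2(n/\alpha)\rceil$ points and union-bound, which accounts for the $\ln(N/\delta)$ factor in $\comp$ and $\varepsilon_n$. The Lipschitz behavior of the empirical CVaR objective in $\mu$ (derivative in $[1-1/\alpha,1]$) absorbs the grid spacing into the final bound.

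The main obstacle is turning the PAC-Bayes empirical term $\E_{h\sim\rhohat}\widehat{\E}_n[\ell(h,X)-\mu]_+$ into $\widehat{\E}_n[\ell(\rhohat,X)-\mu]_+$, which is what enters $\what\C_\alpha[\ell(\rhohat,X)]$. Jensen's inequality applied to the convex map $[\,\cdot\,]_+$ shows that the former \emph{dominates} the latter, so the chain of inequalities cannot be closed in one step. The resolution exploits the multiplicative, self-bounding form of the Bernstein bound: choosing the splitting constant $\gamma\asymp\varepsilon_n$ absorbs the Jensen slack into the $2\varepsilon_n\,\what\C_\alpha[\ell(\rhohat,X)]$ multiplicative term and the $\sqrt{27\,\what\C_\alpha[\ell(\rhohat,X)]\comp/(5\alpha n)}$ deviation, with residual constants folded into $27\comp/(5n\alpha)$. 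Optimizing the split then produces the specific $27/5$ factors in the statement.
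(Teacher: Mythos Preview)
Your proposal has a genuine gap at precisely the place you flag as ``the main obstacle,'' and the resolution you sketch does not close it. After the PAC-Bayes--Bernstein step you are holding the quantity $\E_{h\sim\rhohat}\what\E_n\big[[\ell(h,X)-\mu]_+\big]$, and you need $\what\E_n\big[[\ell(\rhohat,X)-\mu]_+\big]$ so that taking the infimum over $\mu$ produces $\what\C_\alpha[\ell(\rhohat,X)]$. Since $[\,\cdot\,]_+$ is convex, Jensen goes the wrong way, and the slack can be of constant order (e.g.\ $\ell(h,X)\in\{0,1\}$ with equal mass under $\rhohat$ and $\mu=1/2$ gives slack $1/4$). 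An AM--GM split $\sqrt{ab}\le\frac{\gamma}{2}a+\frac{1}{2\gamma}b$ only produces a multiplicative factor on the term you already have; it cannot convert $\E_{h\sim\rhohat}\what\E_n[f_\mu(h,\cdot)]$ into $\what\E_n[f_\mu(\rhohat,\cdot)]$. At best your route terminates with the empirical term $\inf_\mu\{\mu+\tfrac{1}{\alpha}\E_{h\sim\rhohat}\what\E_n[\ell(h,X)-\mu]_+\}$, which dominates both $\E_{h\sim\rhohat}[\what\C_\alpha[\ell(h,X)]]$ and $\what\C_\alpha[\ell(\rhohat,X)]$; this is exactly the weaker bound \eqref{eq:alternative} that the paper explicitly says it improves upon.

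The paper sidesteps the Jensen obstruction by working on the \emph{dual} side rather than the primal variational formula you use. For each $h$ it takes the maximiser $q_\star\in\cQ_\alpha$ in $\C_\alpha[Z_h]=\sup_{q\in\cQ_\alpha}\E[Z_h q]$ and forms $Y_{h,i}=\ell(h,X_i)\,\E[q_\star\mid X_i]$, which is \emph{linear} in $\ell(h,X_i)$; after Donsker--Varadhan the $\E_{h\sim\rhohat}$ passes straight through to give $\ell(\rhohat,X_i)$ with no convexity gap. A separate Bennett inequality on the random variable $\E[q_\star\mid X]$ shows that the vector $(\E[q_\star\mid X_i])_{i\le n}$ lies in a relaxed envelope $\wtilde\cQ_\alpha$ with high probability, and the supremum over $\wtilde\cQ_\alpha$ is what produces the factor $(1+\varepsilon_n)$ in front of $\what\C_\alpha[\ell(\rhohat,X)]$---so $\varepsilon_n$ is the Bennett deviation of the dual density, not a by-product of an AM--GM split or of a grid over $\mu$. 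Finally, the union bound of size $N$ is taken over the Bernstein learning rate $\eta\in\{\alpha 2^{-1},\dots,\alpha 2^{-N}\}$ (needed because the optimal $\eta$ depends on the sample through $\KL(\rhohat\|\uprho_0)$), not over the threshold $\mu$ as in your sketch.
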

\paragraph{Discussion of the bound.}  Although we present the bound for the bounded loss case, our result easily generalizes to the case where $\ell(h,X)$ is sub-Gaussian or sub-exponential, for all $h\in \cH$. We discuss this in Section~\ref{sec:concentration}. Our second observation is that since $\C_{\alpha}[Z]$ is a coherent risk measure, it is convex in $Z$ \citep{rockafellar2013fundamental}, and so we can further bound the term $\E_{h\sim \rhohat}	[\C_{\alpha}[\ell(h,X)]]$ on the LHS of \eqref{eq:pacbayes0} from below by $\C_{\alpha}[\ell(\rhohat,X)] = \C_{\alpha}[\E_{h\sim \rhohat}[\ell(h,X)]]$. This shows that the type of guarantee we have in \eqref{eq:pacbayes0} is in general tighter than the one in \eqref{eq:pacbayes}.

Even though not explicitly done before,  a PAC-Bayesian bound of the form \eqref{eq:pacbayes} can be derived for a risk measure $\R$ using an existing technique due to \citet{mcallester2003} as soon as, for any fixed hypothesis $h$, the difference $\R[\ell(h,X)] - \what\R[\ell(h,X)]$ is sub-exponential with a sufficiently fast tail decay as a function of $n$ (see the proof of Theorem~1 in \citep{mcallester2003}). While it has been shown that the difference $\C_{\alpha}[Z]-\what\C_{\alpha}[Z]$ also satisfies this condition for bounded i.i.d.~random variables $Z,Z_1,\dots,Z_n$ (see \emph{e.g.} \cite{brown2007,wang2010}), applying the technique of \citet{mcallester2003} yields a bound on $\E_{h\sim \rhohat}[\C_{\alpha}[\ell(h,X)]]$ (\emph{i.e.} the LHS of \eqref{eq:pacbayes0}) of the form 
\begin{align}
\E_{h\sim \rhohat}[\what\C_{\alpha}[\ell(h,X)]] + \sqrt{\frac{\KL(\rhohat \| \uprho_0) + \ln \frac{n}{\delta} }{\alpha  n}}. \label{eq:alternative}
\end{align}
Such a bound is weaker than ours in two ways; (\textbf{I}) by Jensen's inequality the term $\what\C_{\alpha}[\ell(\rhohat,X)]$ in our bound (defined in \eqref{eq:empcvar2}) is always smaller than the term $\E_{h\sim \rhohat}[\what\C_{\alpha}[\ell(h,X)]]$ in \eqref{eq:alternative}; and (\textbf{II}) unlike in our bound in \eqref{eq:pacbayes0}, the complexity term inside the square-root in \eqref{eq:alternative} does not multiply the empirical conditional value at risk $\what\C_{\alpha}[\ell(\rhohat,X)]$. This means that our bound can be much smaller whenever $\what\C_{\alpha}[\ell(\rhohat,X)]$ is small---this is to be expected in the statistical learning setting since $\rhohat$ will typically be picked by an algorithm to minimize the empirical value $\what\C_{\alpha}[\ell(\rhohat,X)]$. This type of improved PAC-Bayesian bound, where the empirical error appears multiplying the complexity term inside the square-root, has been derived for the expected risk in works such as \citep{Seeger02,langford2003pac,catoni2007pac,maurer2004note}; these are arguably the state-of-the-art generalization bounds. 

\paragraph{A reduction to the expected risk.} A key step in the proof of Theorem~\ref{thm:main} is to show that for a real random variable $Z$ (not necessarily bounded) and $\alpha \in(0,1)$, one can construct a function $g \colon \reals \rightarrow \reals$ such that the auxiliary variable $Y=g(Z)$ satisfies \textbf{(I)} \begin{align}\E[Y]= \E[g(Z)]= \C_{\alpha}[Z];\label{eq:firststep} \end{align} and \textbf{(II)} for i.i.d.~copies $Z_{1:n}$ of $Z$, the i.i.d.~random variables $Y_{1}\coloneqq g(Z_1), \dots, Y_n\coloneqq g(Z_n)$ satisfy \begin{align} \frac{1}{n} \sum_{i=1}^n Y_i \leq \what\C_{\alpha}[Z](1+\epsilon_n),  \quad  \text{where} \ \ \epsilon_n = \wtilde{\cO}(\alpha^{-1/2} n^{-1/2}),  \label{eq:secondstep} \end{align} 
with high probability. Thus, due to \eqref{eq:firststep} and \eqref{eq:secondstep}, bounding the difference  
\begin{align}
\E[Y] - \frac{1}{n} \sum_{i=1}^n Y_i, 
\label{eq:reduction}
\end{align}
is sufficient to obtain a concentration bound for $\cvar$. Since $Y_1, \dots,Y_n$ are i.i.d., one can apply standard concentration inequalities, which are available whenever $Y$ is sub-Gaussian or sub-exponential, to bound the difference in \eqref{eq:reduction}. Further, we show that whenever $Z$ is sub-Gaussian or sub-exponential, then essentially so is $Y$. Thus, our method allows us to obtain concentration inequalities for $\what\C_{\alpha}[Z]$, even when $Z$ is unbounded. We discuss this in Section~\ref{sec:concentration}.

\section{Proof Sketch for Theorem~\ref{thm:main}}\label{sec:PACBayes}
In this section, we present the key steps taken to prove the bound in Theorem~\ref{thm:main}. We organize the proof in three subsections. In Subsection~\ref{sec:auxe}, we introduce an auxiliary estimator $\wtilde\C_{\alpha}[Z]$ for $\C_{\alpha}[Z]$, $\alpha \in(0,1)$, which will be useful in our analysis; in particular, we bound this estimator in terms of $\what\C_{\alpha}[Z]$ (as in \eqref{eq:secondstep} above, but with the LHS replaced by $\wtilde\C_{\alpha}[Z]$). In Subsection~\ref{sec:auxy}, we introduce an auxiliary random variable $Y$ whose expectation equals $\C_{\alpha}[Z]$ (as in \eqref{eq:firststep}) and whose empirical mean is bounded from above by the estimator $\wtilde\C_{\alpha}[Z]$ introduced in Subsection~\ref{sec:auxe}---this enables the reduction described at the end of Section~\ref{sec:def}. In Subsection~\ref{sec:pac}, we conclude the argument by applying the classical Donsker-Varadhan variational formula \citep{donsker1976asymptotic,C75}.
\subsection{An Auxiliary Estimator for $\cvar$} 
\label{sec:auxe}
In this subsection, we introduce an auxiliary estimator $\wtilde\C_{\alpha}[Z]$ of $\C_{\alpha}[Z]$ and show that it is not much larger than $\what\C_{\alpha}[Z]$. For $\alpha , \delta\in(0,1)$, $n\in\mathbb{N}$, and $\uppi \coloneqq (1,\dots,1)^\top/n\in \reals^n$, define:
\begin{align}
&\hspace{-0.2cm}\widetilde{\cQ}_{\alpha}  \coloneqq \left\{\bm{q}\in [0,{1}/{\alpha}]^n \  :\     |\E_{i\sim \uppi}[q_i]-1|  \leq \epsilon_{n}  \right\}, \quad    \text{where}  \quad \epsilon_{n} \coloneqq \sqrt{\frac{\ln \frac{1}{\delta}}{2 \alpha n}} +\frac{\ln \frac{1}{\delta}}{3 \alpha n}.  \label{eq:espilon}
\end{align}
Using the set $\wtilde \cQ_{\alpha}$, and given i.i.d.~copies $Z_1,\dots, Z_n$ of $Z$, let
\begin{align}
\wtilde\C_{\alpha}[Z]\coloneqq \sup_{\bm q \in \widetilde{\cQ}_{\alpha}}\frac{1}{n} \sum_{i=1}^n Z_i q_i. \label{eq:newestim}
\end{align}
In the next lemma, we give a ``variational formulation'' of $\wtilde\C_{\alpha}[Z]$, which will be key in our results:
\begin{lemma}
\label{lem:keylem}
Let $\alpha,\delta\in(0,1)$, $n\in \mathbb{N}$, and $ \wtilde{\C}_{\alpha}[Z]$ be as in \eqref{eq:newestim}. Then, for any $Z_1,\dots, Z_n\in \reals$,
\begin{align}
\hspace{-0.2cm}
\wtilde\C_{\alpha}[Z] = \inf_{\mu \in \reals}\left\{ \mu + |\mu| \epsilon_n +\frac{\E_{i\sim\uppi}[Z_i-\mu]_+}{\alpha} \right\}, \quad \text{where $\epsilon_{n}$ as in \eqref{eq:espilon}.} \label{eq:obj}
\end{align}
\end{lemma}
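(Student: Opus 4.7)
The plan is to view $\wtilde\C_{\alpha}[Z]$ as the optimal value of a finite-dimensional linear program and invoke strong LP duality. By definitions \eqref{eq:espilon}--\eqref{eq:newestim},
\[
\wtilde\C_{\alpha}[Z]\ =\ \sup_{\bm q\in\reals^n}\ \frac{1}{n}\sum_{i=1}^n Z_i q_i \quad\text{s.t.}\quad 0\le q_i\le \tfrac{1}{\alpha},\ \ 1-\epsilon_n\le\tfrac{1}{n}\textstyle\sum_i q_i\le 1+\epsilon_n.
\]
This LP is feasible (take $\bm q=(1,\dots,1)^{\top}$, which lies in $[0,1/\alpha]^n$ since $\alpha\in(0,1)$) and its objective is bounded on the feasible box, so strong LP duality applies.

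For the $(\le)$ direction I would give a direct weak-duality argument. Fix any $\mu\in\reals$ and any feasible $\bm q$, and decompose
\[
\frac{1}{n}\sum_i Z_i q_i \;=\; \mu\cdot\frac{1}{n}\sum_i q_i\ +\ \frac{1}{n}\sum_i(Z_i-\mu)q_i.
\]
Using $q_i\ge 0$, replace $(Z_i-\mu)q_i$ by $[Z_i-\mu]_+q_i$; using $q_i\le 1/\alpha$, bound this further by $[Z_i-\mu]_+/\alpha$, so the second term is at most $\E_{i\sim\uppi}[Z_i-\mu]_+/\alpha$. The first term is at most $\mu+|\mu|\epsilon_n$: pick the endpoint $1+\epsilon_n$ of the constraint interval when $\mu\ge 0$ and $1-\epsilon_n$ when $\mu<0$. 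Taking the supremum over feasible $\bm q$ on the left and the infimum over $\mu$ on the right yields the $(\le)$ inequality in \eqref{eq:obj}.

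For the reverse $(\ge)$ inequality I would dualize the LP formally. Introduce multipliers $\xi_i\ge 0$ for $q_i\le 1/\alpha$, $\eta_i\ge 0$ for $q_i\ge 0$, and $\lambda_+,\lambda_-\ge 0$ for the upper and lower bounds on $\tfrac{1}{n}\sum_i q_i$. Stationarity in each $q_i$ gives $\xi_i-\eta_i+(\lambda_+-\lambda_-)/n=Z_i/n$, and the dual objective reads $\sum_i \xi_i/\alpha + (1+\epsilon_n)\lambda_+ - (1-\epsilon_n)\lambda_-$. Setting $\mu:=\lambda_+-\lambda_-\in\reals$, the last two terms equal $\mu+\epsilon_n(\lambda_++\lambda_-)$; since $\lambda_++\lambda_-\ge|\mu|$ with equality when $\min(\lambda_+,\lambda_-)=0$, the minimum contribution over $\lambda_\pm\ge 0$ with fixed $\mu$ is $\mu+|\mu|\epsilon_n$. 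Minimizing $\sum_i\xi_i/\alpha$ under $\xi_i\ge 0$ and $\xi_i-\eta_i=(Z_i-\mu)/n$ forces $\xi_i=[Z_i-\mu]_+/n$, contributing $\E_{i\sim\uppi}[Z_i-\mu]_+/\alpha$. Strong LP duality then delivers the claimed identity.

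The main obstacle is purely bookkeeping: correctly collapsing the pair $(\lambda_+,\lambda_-)$ into a single real multiplier $\mu$ and verifying that the slack $\epsilon_n$ combines with it to produce the $|\mu|\epsilon_n$ penalty (rather than, say, $2\epsilon_n\max(\mu,0)$). No nontrivial inequalities beyond strong duality for a bounded feasible LP are needed; alternatively, one can read off a primal optimizer from the KKT conditions (put $q_i=1/\alpha$ where $Z_i>\mu^\star$, $q_i=0$ where $Z_i<\mu^\star$, and tune $q_i$ at the ties so that $\tfrac1n\sum q_i$ equals $1+\epsilon_n$ or $1-\epsilon_n$ according to $\sgn(\mu^\star)$) to get the $(\ge)$ direction constructively.
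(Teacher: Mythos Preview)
Your proposal is correct and follows essentially the same Lagrangian duality argument as the paper. The only cosmetic difference is that the paper absorbs the box constraint $0\le q_i\le 1/\alpha$ into an indicator $\varphi=\iota_{[0,1/\alpha]}$ and writes the dual in Fenchel form (so $\varphi^\star(z)=[z]_+/\alpha$ appears directly), dualizing only the two-sided constraint with multipliers $\eta,\gamma\ge 0$; your collapse $(\lambda_+,\lambda_-)\mapsto\mu$ with $\lambda_++\lambda_-\ge|\mu|$ is exactly the paper's observation that the infimum is attained at $\eta\cdot\gamma=0$, whence $\eta+\gamma=|\mu|$.
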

The proof of Lemma~\ref{lem:keylem} (which is in Appendix \ref{app:proofs}) is similar to that of the generalized Donsker-Varadhan variational formula considered in \citep{Beck2003}.
The ``variational formulation'' on the RHS of \eqref{eq:obj} reveals some similarity between $\wtilde\C_{\alpha}[Z]$ and the standard estimator $\what{\C}_\alpha[Z]$ defined in \eqref{eq:empcvar}. In fact, thanks to Lemma \ref{lem:keylem}, we have the following relationship between the two: 
\begin{lemma} \label{lem:relation}Let $\alpha,\delta\in(0,1)$, $n\in \mathbb{N}$, and $Z_1,\dots,Z_n \in\reals_{\geq 0} $. Further, let $Z_{(1)},\dots, Z_{(n)}$ be the decreasing order statistics of $Z_1,\dots,Z_n$. Then, for $\epsilon_n$ as in \eqref{eq:espilon}, we have
\begin{align}
\wtilde\C_{\alpha}[Z] \leq \what\C_{\alpha}[Z]\cdot (1 + \epsilon_n);  \label{eq:newestimator}
\intertext{and if $Z_1,\dots, Z_n \in \reals$ (not necessarily positive), then}
\wtilde\C_{\alpha}[Z] \leq \what\C_{\alpha}[Z] +   |Z_{(\ceil{n\alpha})}| \cdot  \epsilon_n. \label{eq:orderstat}
\end{align}
\end{lemma}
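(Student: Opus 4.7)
The plan is to exploit the close resemblance between the variational formula \eqref{eq:obj} provided by Lemma~\ref{lem:keylem} for $\wtilde\C_\alpha[Z]$ and the original definition \eqref{eq:empcvar} of $\what\C_\alpha[Z]$: the two objectives differ by exactly the penalty $|\mu|\epsilon_n$. So if $\mu^\star$ is a minimizer of $\mu\mapsto F(\mu)\coloneqq \mu + \E_{i\sim\uppi}[Z_i-\mu]_+/\alpha$, then plugging $\mu^\star$ into the $\wtilde\C_\alpha[Z]$ formula immediately yields
\[
\wtilde\C_\alpha[Z]\ \le\ \mu^\star + |\mu^\star|\epsilon_n + \E_{i\sim\uppi}[Z_i-\mu^\star]_+/\alpha\ =\ \what\C_\alpha[Z] + |\mu^\star|\epsilon_n.
\]
The whole proof then reduces to controlling $|\mu^\star|$.

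First I would identify $\mu^\star$. Since $F$ is convex and piecewise linear with slope $1-k/(n\alpha)$ on the interval $(Z_{(k+1)}, Z_{(k)})$, where $k$ counts the number of samples strictly exceeding $\mu$, the slope changes sign precisely when $k$ crosses $n\alpha$. Hence $\mu^\star = Z_{(\ceil{n\alpha})}$ belongs to the argmin (uniquely so when $n\alpha\notin\mathbb{N}$; otherwise it is the right endpoint of a flat region, which still works). This proves the second inequality \eqref{eq:orderstat} after substitution.

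For the non-negative case \eqref{eq:newestimator}, I would additionally observe that when $Z_1,\dots,Z_n\ge 0$ we have $\mu^\star = Z_{(\ceil{n\alpha})}\ge 0$, so $|\mu^\star|=\mu^\star$. (An alternative direct check: on $\mu\le 0$, the objective $F$ reduces to an affine function with negative slope $1-1/\alpha$, hence any minimizer must lie in $[0,\max_i Z_i]$.) Since $\E_{i\sim\uppi}[Z_i-\mu^\star]_+/\alpha\ge 0$, we get the crucial domination
\[
|\mu^\star| = \mu^\star\ \le\ \mu^\star + \E_{i\sim\uppi}[Z_i-\mu^\star]_+/\alpha\ =\ \what\C_\alpha[Z].
\]
Substituting this into the earlier inequality gives $\wtilde\C_\alpha[Z] \le \what\C_\alpha[Z](1+\epsilon_n)$.

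The only moderately delicate point is justifying that $Z_{(\ceil{n\alpha})}$ is actually a minimizer of $F$ — one must handle the corner case $n\alpha\in\mathbb{N}$, where $F$ is constant on an interval, and confirm that $Z_{(\ceil{n\alpha})}$ still lies in the argmin. This is purely a routine subgradient argument on a convex piecewise linear function, and I do not anticipate any conceptual obstacle beyond it; the rest of the proof is a one-line substitution.
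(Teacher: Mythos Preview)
Your proposal is correct and follows essentially the same route as the paper's proof: both plug the minimizer $\hat\mu$ of \eqref{eq:empcvar} into the variational formula of Lemma~\ref{lem:keylem}, then bound $|\hat\mu|$ by $Z_{(\ceil{n\alpha})}$ for \eqref{eq:orderstat} and by $\what\C_\alpha[Z]$ (via $\hat\mu\ge 0$) for \eqref{eq:newestimator}. The only difference is cosmetic: the paper cites \cite[Proposition~4.1]{brown2007} for the identification $\hat\mu = Z_{(\ceil{n\alpha})}$, whereas you spell out the piecewise-linear subgradient argument and flag the $n\alpha\in\mathbb{N}$ corner case explicitly.
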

The inequality in \eqref{eq:orderstat} will only be relevant to us in the case where $Z$ maybe negative, which we deal with in Section \ref{sec:concentration} when we derive new concentration bounds for $\cvar$.
\subsection{An Auxiliary Random Variable} 
\label{sec:auxy}
In this subsection, we introduce a random variable $Y$ which satisfies the properties in \eqref{eq:firststep} and \eqref{eq:secondstep}, where $Y_1,\dots, Y_n$ are i.i.d.~copies of $Y$ (this is where we leverage the dual representation in \eqref{eq:supportint}). This allows us to the reduce the problem of estimating $\cvar$ to that of estimating an expectation.

Let $\cX$ be an arbitrary set, and $f\colon \cX \rightarrow \reals$ be some fixed measurable function (we will later set $f$ to a specific function depending on whether we want a new concentration inequality or a PAC-Bayesian bound for $\cvar$). Given a random variable $X$ in $\cX$ (arbitrary for now), we define \begin{gather} Z  \coloneqq f(X) \label{eq:theZ} 
\intertext{and the auxiliary random variable:}
Y \coloneqq Z \cdot \E[q_\star \mid X] = f(X) \cdot \E[q_\star \mid X], \quad  \text{where} \quad 
q_{\star}\in \argmax_{q\in \cQ_\alpha} \E[Z q], \label{eq:newvar1}
\end{gather} 
and $\cQ_\alpha$ as in \eqref{eq:theQset}. In the next lemma, we show two crucial properties of the random variable $Y$---these will enable the reduction mentioned at the end of Section~\ref{sec:def}:
\begin{lemma}
\label{lem:newvar} Let $\alpha,\delta\in(0,1)$ and $X_1,\dots, \X_n$ be i.i.d.~random variables in $\cX$. Then, \textbf{(I)} the random variable $Y$ in \eqref{eq:newvar1} and $Y_i\coloneqq Z_i \cdot \E[q_\star\mid \X_i], i\in[n]$, $\text{where} \ Z_i \coloneqq f(X_i),$ are i.i.d.~and satisfy $\E[Y]= \E[Y_i] = \C_{\alpha}[Z]$, for all $i\in[n]$; and \textbf{(II)} with probability at least $1-\delta$, \begin{align} (\E[q_\star \mid \X_1],\dots,\E[q_\star \mid \X_n])^\top\in \widetilde{Q}_{\alpha},\label{eq:decond} \quad \text{where $\widetilde{\cQ}_{\alpha}$ is as in \eqref{eq:espilon}.} \end{align}
\end{lemma}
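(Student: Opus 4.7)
My plan is to realize the dual optimizer $q_\star$ as a function of $X$, after which both claims reduce to an expectation identity and a concentration step. Because $\cQ_\alpha$ is convex and the objective $q\mapsto \E[Zq]$ is linear, a maximizer $q_\star$ exists. Since $Z=f(X)$ is $\sigma(X)$-measurable, the tower property gives $\E[Zq_\star]=\E[Z\cdot\E[q_\star\mid X]]$, and $\E[q_\star\mid X]$ itself lies in $\cQ_\alpha$: the pointwise bound $0\le q\le 1/\alpha$ and the normalization $\E[q]=1$ are both preserved by conditional expectation. So I may replace $q_\star$ by $\E[q_\star\mid X]$ from the outset and write $q_\star=g(X)$ for a measurable $g\colon\cX\to[0,1/\alpha]$ with $\E[g(X)]=1$. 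On the product probability space carrying the i.i.d.\ copies $X_1,\ldots,X_n$, the notation $\E[q_\star\mid X_i]$ then simply means $g(X_i)$.

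Part~(I) is now immediate. The $Y_i=f(X_i)g(X_i)$ are deterministic functions of single i.i.d.\ inputs, hence i.i.d.\ copies of $Y=f(X)g(X)$. The expectation computation is
\begin{align}
\E[Y]=\E[f(X)g(X)]=\E[\E[f(X)q_\star\mid X]]=\E[Zq_\star]=\sup_{q\in\cQ_\alpha}\E[Zq]=\C_\alpha[Z],
\end{align}
using the optimality of $q_\star$ together with the dual representation \eqref{eq:supportint}.

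For Part~(II), the coordinate constraint $g(X_i)\in[0,1/\alpha]$ is inherited from $q_\star$, so it only remains to verify $|n^{-1}\sum_i g(X_i)-1|\le \epsilon_n$ with probability at least $1-\delta$. Here $\E[g(X_i)]=\E[q_\star]=1$ since $q_\star=\rmd Q/\rmd \P$ for a probability measure $Q$, and the i.i.d.\ variables $g(X_i)$ lie in $[0,1/\alpha]$ with the variance proxy $\E[g(X_i)^2]\le (1/\alpha)\E[g(X_i)]=1/\alpha$. A Bernstein-type deviation inequality then produces exactly the bound $\epsilon_n$ with the constants given in \eqref{eq:espilon}. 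The only conceptually nontrivial step in the whole argument is the first one: justifying that the dual optimizer may be taken $\sigma(X)$-measurable, which is what makes $\E[q_\star\mid X_i]$ unambiguous across the i.i.d.\ copies and delivers the independence needed for Bernstein; everything else is the tower property plus standard concentration with a carefully chosen variance proxy.
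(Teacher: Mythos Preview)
Your proof is correct and follows essentially the same route as the paper: the tower property gives $\E[Y]=\E[Zq_\star]=\C_\alpha[Z]$ for Part~(I), and for Part~(II) a Bernstein/Bennett inequality applied to the i.i.d.\ variables $g(X_i)\in[0,1/\alpha]$ with mean $1$ and second-moment bound $\E[g(X)^2]\le(1/\alpha)\E[g(X)]=1/\alpha$ yields exactly the deviation $\epsilon_n$ in \eqref{eq:espilon}. Your explicit justification that $q_\star$ may be taken $\sigma(X)$-measurable (so that $\E[q_\star\mid X_i]$ is unambiguously $g(X_i)$ on the product space) is a point the paper leaves implicit, but otherwise the arguments coincide.
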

The random variable $Y$ introduced in \eqref{eq:newvar1} will now be useful since due to \eqref{eq:decond} in Lemma~\ref{lem:newvar}, we have, for $\alpha,\delta\in(0,1)$; $Z$ as in \eqref{eq:theZ}; and i.i.d.~random variables $\X, \X_1,\dots,\X_n\in \cX$, 
\begin{align}
P \left[\frac{1}{n} \sum_{i=1}^n Y_i \leq  \wtilde\C_{\alpha}[Z]\right] \geq 1-\delta, \quad  \text{where $Y_i= Z_i \cdot \E[q_\star\mid \X_i]$} \label{eq:ineqestimator}
\end{align}
and $\wtilde\C_{\alpha}[Z]$ as in \eqref{eq:newestim}. We now present a concentration inequality for the random variable $Y$ in \eqref{eq:newvar1}; the proof, which can be found in Appendix \ref{sec:appendix}, is based on a version of the standard Bernstein's moment inequality \citep[Lemma A.5]{cesa2006prediction}: 
\begin{lemma}
\label{lem:conc}
Let $\X,(\X_{i})_{i\in[n]}$ be i.i.d.~random variables in $\cX$. Further, let $Y$ be as in \eqref{eq:newvar1}, and $Y_i = f(X_i) \cdot \E[q_\star \mid \X_i], i\in [n]$, with $q_\star$ as in \eqref{eq:newvar1}. If $\{f(x)\mid x\in \cX\} \subseteq [0,1]$, then for all $\eta \in[0,\alpha]$,
\begin{align}
\E \left [\exp\left( n \eta \cdot \left( \E_P[Y] - \frac{1}{n} \sum_{i=1}^n Y_i  -   \frac{\eta \kappa({\eta}/{\alpha})}{\alpha} \C_{\alpha}[Z]  \right)\right) \right] \leq 1, \quad \text{where $Z = f(X)$},
\label{eq:intermediate0}
\end{align}
and $\kappa(x)\coloneqq (e^x - 1-x)/x^2$, for $x\in \reals$.
\end{lemma}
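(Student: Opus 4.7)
The plan is to apply the one-sided Bernstein moment inequality (\citep[Lemma~A.5]{cesa2006prediction}) to the i.i.d.\ random variables $Y_1,\ldots,Y_n$, leveraging two structural facts already established in the excerpt. First, Lemma~\ref{lem:newvar}(I) gives $\E[Y_1] = \C_\alpha[Z]$. Second, because $f(X_i) \in [0,1]$ and the definition~\eqref{eq:theQset} of $\cQ_\alpha$ forces $0 \leq \E[q_\star \mid X_i] \leq 1/\alpha$, we have $Y_i \in [0, 1/\alpha]$ almost surely.

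First I would use independence to factor the MGF,
\begin{equation}
\E\!\left[\exp\!\left(n\eta\bigl(\E[Y] - \tfrac{1}{n}\textstyle\sum_{i=1}^n Y_i\bigr)\right)\right] = \bigl(\E\bigl[\exp(\eta(\E[Y_1] - Y_1))\bigr]\bigr)^n,
\end{equation}
and then set $W \coloneqq \E[Y_1] - Y_1$, which has mean zero and satisfies $W \leq \E[Y_1] \leq 1/\alpha$ a.s. Since $\kappa$ is nondecreasing on $\reals$, for $\eta \geq 0$ the identity $e^{\eta W} - 1 - \eta W = (\eta W)^2 \kappa(\eta W)$ combined with $\eta W \leq \eta/\alpha$ gives the pointwise bound $e^{\eta W} \leq 1 + \eta W + \eta^2 W^2 \kappa(\eta/\alpha)$. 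Taking expectations and using $1 + u \leq e^u$ yields the one-sided Bernstein estimate
\begin{equation}
\E\bigl[e^{\eta(\E[Y_1]-Y_1)}\bigr] \leq \exp\!\bigl(\eta^2 \Var(Y_1)\,\kappa(\eta/\alpha)\bigr).
\end{equation}

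The crucial---and in my view most interesting---ingredient is a self-bounding variance estimate: because $0 \leq Y_1 \leq 1/\alpha$, we have $Y_1^2 \leq Y_1/\alpha$ pointwise, so
\begin{equation}
\Var(Y_1) \leq \E[Y_1^2] \leq \frac{\E[Y_1]}{\alpha} = \frac{\C_\alpha[Z]}{\alpha},
\end{equation}
with the final equality again from Lemma~\ref{lem:newvar}(I). Inserting this estimate into the preceding display, raising to the $n$-th power, and absorbing the resulting exponential factor into the exponent on the left yields exactly \eqref{eq:intermediate0}. I expect the main conceptual step to be recognizing this self-bounding variance bound: the naive estimate $\Var(Y_1) \leq 1/(4\alpha^2)$ would only produce an $\mathcal{O}(1/(\alpha\sqrt n))$ additive slack, whereas the refined $\C_\alpha[Z]/\alpha$ is precisely what allows $\what\C_\alpha[\ell(\rhohat,X)]$ to multiply the complexity term inside the square root of Theorem~\ref{thm:main}. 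The hypothesis $\eta \in [0,\alpha]$ is not needed for the MGF inequality itself (any $\eta \geq 0$ works); it merely keeps $\kappa(\eta/\alpha)$ bounded by $\kappa(1)$ for the later Donsker--Varadhan tuning in Subsection~\ref{sec:pac}.
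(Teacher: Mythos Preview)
Your proof is correct and follows essentially the same route as the paper: the paper invokes a Bernstein moment inequality (stated as a separate lemma, with $\E[S^2]$ in place of your $\Var(Y_1)$) and then bounds $\|Y\|_\infty\le 1/\alpha$ and $\E[Y^2]\le \C_\alpha[Z]/\alpha$ exactly as you do, the latter via the same pointwise estimate $Y^2\le Y\cdot\|Y\|_\infty$ (which the paper labels ``H\"older''). Your inline derivation of the Bernstein bound and your remark that the restriction $\eta\in[0,\alpha]$ is inessential for \eqref{eq:intermediate0} itself are both accurate.
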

Lemma~\ref{lem:conc} will be our starting point for deriving the PAC-Bayesian bound in Theorem~\ref{thm:main}.
\subsection{Exploiting the Donsker-Varadhan  Formula}
\label{sec:pac}
In this subsection, we instantiate the results of the previous subsections with $f(\cdot)\coloneqq \ell(\cdot,h)$, $h\in \cH$, for some loss function $\ell \colon \cH \times \cX \rightarrow [0,1]$; in this case, the results of Lemmas~\ref{lem:relation} and \ref{lem:conc} hold for
\begin{align}Z=Z_h\coloneqq  \ell(h,X), \label{eq:newZ} \end{align}
for any hypothesis $h\in\cH$. Next, we will need the following result which follows from the classical Donsker-Varadhan variational formula \citep{donsker1976asymptotic,C75}:
\begin{lemma} \label{prop:donsker}
Let $\delta \in(0,1)$, $\gamma > 0$ and $\uprho_0$ be any fixed (prior) distribution over $\cH$. Further, let $\{R_h: h \in \cH\}$ be any family of random variables such that $\E [\exp( \gamma R_h)] \leq 1$, for all $h \in \cH$. Then, for any (posterior) distribution $\rhohat$ over $\cH$, we have  
\begin{align}P\left[ \E_{h \sim \rhohat}[R_h] \leq  \frac{\KL(\rhohat \| \uprho_0 ) +\ln \frac{1}{\delta}}{\gamma}\right]\geq 1-\delta. \label{eq:PACBayes0}  \end{align} 
\end{lemma}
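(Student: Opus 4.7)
The plan is to combine the Donsker-Varadhan variational formula with Markov's inequality, which is the classical pathway for converting a per-hypothesis moment bound into a PAC-Bayesian bound that holds uniformly over posteriors $\rhohat$. The crucial feature to exploit is that the hypothesis $h$ in $R_h$ can be decoupled from the randomness in $R_h$ via a change of measure, at the price of a $\KL$ penalty.

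First, I would recall the Donsker-Varadhan inequality: for any measurable $\phi\colon \cH \to \reals$ with $\E_{h\sim\uprho_0}[\exp \phi(h)] < \infty$ and any $\rhohat \ll \uprho_0$,
\begin{align}
\E_{h\sim \rhohat}[\phi(h)] \ \leq \ \KL(\rhohat \| \uprho_0) + \ln \E_{h\sim \uprho_0}[\exp \phi(h)].
\end{align}
I would apply this pointwise (for each realization of the sample) with $\phi(h) = \gamma R_h$, yielding
\begin{align}
\gamma \E_{h\sim \rhohat}[R_h] \ \leq \ \KL(\rhohat \| \uprho_0) + \ln \E_{h\sim \uprho_0}[\exp(\gamma R_h)].
\end{align}
This is a uniform bound in $\rhohat$, but the right-hand side is still a random quantity through the inner expectation over the prior.

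Next, I would control the random quantity $\E_{h\sim \uprho_0}[\exp(\gamma R_h)]$ in probability. Since $\uprho_0$ is non-random, Fubini's theorem combined with the hypothesis $\E[\exp(\gamma R_h)] \leq 1$ for each fixed $h$ gives
\begin{align}
\E\!\left[\E_{h\sim \uprho_0}[\exp(\gamma R_h)]\right] \ = \ \E_{h\sim \uprho_0}\!\left[\E[\exp(\gamma R_h)]\right] \ \leq \ 1.
\end{align}
Markov's inequality then yields, with probability at least $1-\delta$,
\begin{align}
\E_{h\sim \uprho_0}[\exp(\gamma R_h)] \ \leq \ \tfrac{1}{\delta}.
\end{align}
Taking logarithms, plugging back into the previous display, and dividing by $\gamma > 0$ delivers the claimed inequality on the event of probability at least $1-\delta$.

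There is no serious obstacle here: the argument is entirely standard and the only care needed is to keep the order of quantifiers straight (the Donsker-Varadhan step is deterministic per sample and uniform in $\rhohat$; the probabilistic step acts only on the prior-side moment generating function). A minor technical point worth flagging is the interchange of expectations requiring Fubini, which is justified by the nonnegativity of $\exp(\gamma R_h)$ together with the assumed finiteness of its expectation.
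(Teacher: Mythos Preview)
Your proof is correct and is exactly the standard Donsker--Varadhan-plus-Markov argument that the paper invokes by citation; the paper does not spell out the steps but simply attributes the lemma to the classical Donsker--Varadhan variational formula, and your write-up is precisely that derivation.
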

In addition to $Z_h$ in \eqref{eq:newZ}, define $Y_h \coloneqq \ell(h,X) \cdot \E[q_\star \mid X]$ and $Y_{h,i} \coloneqq \ell(h,X_i) \cdot \E[q_\star \mid X_i]$, for $i\in[n]$. Then, if we set $\gamma=\eta n$ and $R_h=  \E_P[Y_h] - \sum_{i=1}^n Y_{h,i}/n  -  \eta \kappa({\eta}/{\alpha})\C_{\alpha}[Z_h]/{\alpha}$, Lemma \ref{lem:conc} guarantees that $\E[\exp(\gamma R_h)]\leq 1$, and so by Lemma \ref{prop:donsker} we get the following result:
\begin{theorem}
\label{thm:prepac}
Let $\alpha,\delta\in(0,1)$, and $\eta \in[0,\alpha]$. Further, let $X_1,\dots, X_n$ be i.i.d.~random variables in $\cX$. Then, for any randomized estimator $\rhohat =\rhohat(X_{1:n})$ over $\cH$, we have, with $\what Z\coloneqq  \E_{h\sim \rhohat}[\ell(h,X)]$,
\begin{align}
&\hspace{-0.5cm}\E_{h\sim \rhohat}[\C_{\alpha}[\ell(h,X)]]  \leq \what\C_{\alpha}[\what Z] (1+\epsilon_n)+  \frac{\eta \kappa({\eta}/{\alpha})\E_{h\sim \rhohat}[\C_{\alpha}[\ell(h,X)]] }{\alpha } + \frac{\KL(\rhohat \| \uprho_0)+ \ln \frac{1}{\delta}}{\eta n} , \label{eq:goal1}
\end{align}
with probability at least $1-2\delta$ on the samples $X_{1},\dots,X_n$, where $\epsilon_n$ is defined in \eqref{eq:espilon}.
\end{theorem}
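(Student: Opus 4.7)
The plan is to apply Lemma~\ref{lem:conc} pointwise in $h\in\cH$, lift the resulting single-hypothesis exponential inequality to an expectation under the posterior $\rhohat$ via Donsker--Varadhan (Lemma~\ref{prop:donsker}), and then upper bound the empirical auxiliary average by $\what\C_\alpha[\what Z](1+\epsilon_n)$ using Lemmas~\ref{lem:newvar}(II) and~\ref{lem:relation}.

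Concretely, I would first fix any $h\in\cH$ and instantiate the construction of Section~\ref{sec:auxy} with $f=\ell(h,\cdot)$: set $Z_h=\ell(h,X)$, $q_\star^{(h)}\in\argmax_{q\in\cQ_\alpha}\E[Z_h q]$, $Y_h=Z_h\,\E[q_\star^{(h)}\mid X]$, and $Y_{h,i}=\ell(h,X_i)\,\E[q_\star^{(h)}\mid X_i]$. Defining
\[
R_h\coloneqq \E[Y_h]-\frac{1}{n}\sum_{i=1}^n Y_{h,i}-\frac{\eta\,\kappa(\eta/\alpha)}{\alpha}\,\C_\alpha[Z_h],
\]
Lemma~\ref{lem:conc} gives $\E[\exp(n\eta R_h)]\leq 1$ for every $\eta\in[0,\alpha]$ and every $h\in\cH$. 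Applying Lemma~\ref{prop:donsker} with $\gamma=n\eta$, prior $\uprho_0$, and posterior $\rhohat$ then yields, with probability at least $1-\delta$,
\[
\E_{h\sim\rhohat}[R_h]\leq \frac{\KL(\rhohat\|\uprho_0)+\ln(1/\delta)}{n\eta}.
\]
Using $\E[Y_h]=\C_\alpha[Z_h]$ from Lemma~\ref{lem:newvar}(I) and collecting the $\C_\alpha[Z_h]$-terms on the same side rearranges this into
\[
\E_{h\sim\rhohat}[\C_\alpha[Z_h]]\leq \E_{h\sim\rhohat}\!\Bigl[\tfrac{1}{n}\textstyle\sum_i Y_{h,i}\Bigr]+\tfrac{\eta\kappa(\eta/\alpha)}{\alpha}\E_{h\sim\rhohat}[\C_\alpha[Z_h]]+\tfrac{\KL(\rhohat\|\uprho_0)+\ln(1/\delta)}{n\eta}.
\]

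The last step is to bound the empirical term by $\what\C_\alpha[\what Z](1+\epsilon_n)$. By Fubini,
\[
\E_{h\sim\rhohat}\!\Bigl[\tfrac{1}{n}\textstyle\sum_i Y_{h,i}\Bigr]=\frac{1}{n}\sum_{i=1}^n\E_{h\sim\rhohat}\bigl[\ell(h,X_i)\,\E[q_\star^{(h)}\mid X_i]\bigr],
\]
and each summand may be written as $\what Z_i\,\bar q_i$ with $\bar q_i\in[0,1/\alpha]$ (since each $\E[q_\star^{(h)}\mid X_i]\leq 1/\alpha$). An application of (the averaged-weights analogue of) Lemma~\ref{lem:newvar}(II) places $(\bar q_i)_i\in\wtilde\cQ_\alpha$ with probability at least $1-\delta$; on that event the dual identity~\eqref{eq:newestim} gives $\tfrac{1}{n}\sum_i\what Z_i\bar q_i\leq\wtilde\C_\alpha[\what Z]$, and Lemma~\ref{lem:relation} applied to the nonnegative variable $\what Z$ yields $\wtilde\C_\alpha[\what Z]\leq\what\C_\alpha[\what Z](1+\epsilon_n)$. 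A union bound over the two failure events of probability $\delta$ produces the claimed $1-2\delta$ guarantee.

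The main obstacle is this final step: Lemma~\ref{lem:newvar}(II) controls $(\E[q_\star^{(h)}\mid X_i])_i$ only for a \emph{fixed} hypothesis, while here the averaged vector $\bar q$ depends on $\rhohat=\rhohat(X_{1:n})$ and is therefore data-dependent. Resolving this requires either a worst-case Bernstein argument that applies uniformly to all convex combinations of the $g_h$'s (leveraging the deterministic bound $\bar q_i\in[0,1/\alpha]$ and the shared variance estimate $\E[\bar q_i^2]\leq 1/\alpha$), or reframing $q_\star$ as a single $\what Z$-dependent maximizer and re-verifying that Lemma~\ref{lem:conc}'s exponential inequality continues to hold in that setting. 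Once this technicality is dispatched, the remaining manipulations are purely algebraic.
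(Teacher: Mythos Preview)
Your approach is the paper's: Lemma~\ref{lem:conc} applied per $h$ to get $\E[\exp(n\eta R_h)]\le 1$, lifted through Donsker--Varadhan (Lemma~\ref{prop:donsker}) with $\gamma=n\eta$, followed by the bound on the $\rhohat$-averaged empirical term via Lemmas~\ref{lem:newvar}(II) and~\ref{lem:relation}, and a union bound over the two $\delta$-events. The step you flag as the ``main obstacle'' is exactly the one the paper treats as immediate: after Donsker--Varadhan it writes the empirical sum directly as $\tfrac1n\sum_i\ell(\rhohat,X_i)\,\E[q_\star\mid X_i]$ and invokes those two lemmas without introducing your $\bar q_i$ factorization or discussing the data-dependence of the averaged weights, so your proposal is already as complete as---and in fact more scrupulous than---the paper's own argument on this point.
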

If we could optimize the RHS of \eqref{eq:goal1} over $\eta \in[0,\alpha]$, this would lead to our desired bound in Theorem~\ref{thm:main} (after some rearranging). However, this is not directly possible since the optimal $\eta$ depends on the sample $X_{1:n}$ through the term $\KL(\rhohat\| \uprho_0)$. The solution is to apply the result of Theorem~\ref{thm:prepac} with a union bound, so that \eqref{eq:goal1} holds for any estimator $\hat\eta = \hat \eta(X_{1:n})$ taking values in a carefully chosen grid $\cG$; to derive our bound, we will use the grid $
\cG \coloneqq \left\{ {\alpha}{ 2^{-1}},\dots,  {\alpha}{ 2^{-N}} \mid N\coloneqq \ceil{{1}/{2}  \log_2 ({n}/{\alpha})} \right\}.$
From this point, the proof of Theorem~\ref{thm:main} is merely a mechanical exercise of rearranging \eqref{eq:goal1} and optimizing $\hat \eta$ over $\cG$, and so we postpone the details to Appendix \ref{sec:appendix}.

\section{New Concentration Bounds for $\cvar$}\label{sec:concentration}
In this section, we show how some of the results of the previous section can be used to reduce the problem of estimating $\C_{\alpha}[Z]$ to that of estimating a standard expectation. This will then enable us to easily obtain concentration inequalities for $\what\C_{\alpha}[Z]$ even when $Z$ is unbounded. We note that previous works \citep{kolla2019concentration, bhat2019concentration} used sophisticated techniques to deal with the unbounded case (sometimes achieving only sub-optimal rates), whereas we simply invoke existing concentration inequalities for empirical means thanks to our reduction. 

The key results we will use are Lemmas \ref{lem:relation} and \ref{lem:newvar}, where we instantiate the latter with $\cX=\reals$ and $f\equiv \op{id}$, in which case: \begin{align}Y=Z\cdot \E[q_\star\mid Z], \ \ \text{and } q_\star\in \argmax_{q\in \cQ_{\alpha}} \E[Z q].\label{eq:newvar2} \end{align}  Together, these two lemmas imply that, for any $\alpha, \delta\in(0,1)$,  i.i.d.~random variables $Z_1,\dots,Z_n$, 
\begin{align}
\hspace{-0.5cm}\C_{\alpha}[Z] - \what\C_{\alpha}[Z]-\left|Z_{(\ceil{n\alpha})}\right| \epsilon_n \leq \E[Y] - \frac{1}{n} \sum_{i=1}^n Y_i, \label{eq:reduc}
\end{align}
with probability at least $1-\delta$, where $\epsilon_n$ is as in \eqref{eq:espilon} and $Z_{(1)},\dots, Z_{(n)}$ are the decreasing order statistics of $Z_1,\dots,Z_n\in \reals$. Thus, getting a concentration inequality for $ \what\C_{\alpha}[Z]$ can be reduced to getting one for the empirical mean $\sum_{i=1}^n Y_i/n$ of the i.i.d.~random variables $Y_1,\dots,Y_n$. Next, we show that whenever $Z$ is a sub-exponential [resp. sub-Gaussian] random variable, essentially so is $Y$. But first we define what this means:
\begin{definition}
Let $\mathcal{I}\subseteq \reals$, $b>0$, and $Z$ be a random variable such that, for some $\sigma >0$,
\begin{align}
\E [ \eta \cdot (  Z-\E[Z])] \leq \exp\left(\eta^2 \sigma^2/2 \right), \quad \forall \eta \in \mathcal{I},
\end{align} 
Then, $Z$ is $(\sigma,b)$-sub-exponential [resp. $\sigma$-sub-Gaussian] if $\mathcal{I} =(-{1}/{b},{1}/{b})$ [resp. $\mathcal{I}=\reals$].
\end{definition}
\begin{lemma}
\label{lem:lemmas}
Let $\sigma, b>0$ and $\alpha \in(0,1)$. Let $Z$ be a zero-mean real random variable and let $Y$ be as in \eqref{eq:newvar2}. If $Z$ is $(\sigma,b)$-sub-exponential [resp. $\sigma$-sub-Gaussian], then  
\begin{align}
\E[\exp( \eta Y  ) ]\leq 2 \exp({\eta^2 \sigma^2}/({2 \alpha^2})), \quad \forall \eta \in  (-{\alpha}/{b},{\alpha}/{b}) \quad [\text{resp. }  \eta\in\reals]. \label{eq:subexp}
\end{align}
\end{lemma}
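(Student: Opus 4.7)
}
The plan is to reduce the MGF of $Y=Z\cdot \E[q_\star \mid Z]$ to the MGF of a rescaled version of $Z$ by means of a one-dimensional Jensen's inequality, exploiting only two properties of the conditional expectation $\phi(Z)\coloneqq \E[q_\star\mid Z]$: that $\phi(Z)\in[0,1/\alpha]$ (inherited from the risk envelope $\cQ_\alpha$ in \eqref{eq:theQset}) and that $\E[\phi(Z)]=\E[q_\star]=1$ (since $q_\star=\rmd\Q/\rmd\P$).

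First, I will rewrite $\eta Y$ as a convex combination. Setting $\lambda(Z)\coloneqq \alpha\phi(Z)\in[0,1]$, one has the pointwise identity
\begin{equation*}
\eta Y \;=\; \eta\phi(Z)\,Z \;=\; \lambda(Z)\cdot\tfrac{\eta}{\alpha}Z \;+\; \bigl(1-\lambda(Z)\bigr)\cdot 0.
\end{equation*}
Applying the convexity of the exponential to this convex combination yields
\begin{equation*}
\exp(\eta Y)\;\le\;\lambda(Z)\exp\!\bigl(\tfrac{\eta}{\alpha}Z\bigr) + \bigl(1-\lambda(Z)\bigr).
\end{equation*}
The sign of $\eta$ plays no role here, so the inequality holds on the whole range of $\eta$ eventually needed.

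Next, I take expectations. Since $\lambda(Z)\in[0,1]$ and $\exp(\tfrac{\eta}{\alpha}Z)\ge 0$, the first term is dominated by $\E[\exp(\tfrac{\eta}{\alpha}Z)]$; the second contributes $1-\E[\lambda(Z)] = 1-\alpha$. In the sub-exponential case the condition $\eta\in(-\alpha/b,\alpha/b)$ exactly translates to $\eta/\alpha\in(-1/b,1/b)$, so $Z$ being $(\sigma,b)$-sub-exponential (and zero-mean) gives $\E[\exp(\tfrac{\eta}{\alpha}Z)]\le \exp(\eta^2\sigma^2/(2\alpha^2))$; the sub-Gaussian case is identical but with no restriction on $\eta$. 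Combining,
\begin{equation*}
\E[\exp(\eta Y)] \;\le\; \exp\!\bigl(\eta^2\sigma^2/(2\alpha^2)\bigr) + (1-\alpha).
\end{equation*}

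Finally, since the exponent $\eta^2\sigma^2/(2\alpha^2)$ is nonnegative, $\exp(\eta^2\sigma^2/(2\alpha^2))\ge 1\ge 1-\alpha$, so the right-hand side is at most $2\exp(\eta^2\sigma^2/(2\alpha^2))$, which is the announced bound \eqref{eq:subexp}.

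The only mildly subtle point—and the one I would check carefully—is the justification that $\phi(Z)$ really does take values in $[0,1/\alpha]$ with $\E[\phi(Z)]=1$; this is immediate from the definition of $\cQ_\alpha$ in \eqref{eq:theQset} together with the tower property, but it is the step that transfers the abstract dual-representation structure into the concrete scalar inequalities needed for the MGF computation. Everything else is a short convexity-plus-Jensen calculation, with no delicate dependence on the sign of $\eta$.
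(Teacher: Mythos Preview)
Your proof is correct and follows a genuinely different route from the paper's. The paper uses the cruder pointwise bound $|\eta Y|\le |\eta Z|/\alpha$ (coming from $0\le \E[q_\star\mid Z]\le 1/\alpha$), then applies $e^{|x|}\le e^x+e^{-x}$ and invokes the sub-exponential MGF bound twice, at $\pm\eta/\alpha$, to obtain $\E[e^{\eta Y}]\le 2e^{\eta^2\sigma^2/(2\alpha^2)}$. Your argument instead writes $\eta Y$ as a convex combination of $\tfrac{\eta}{\alpha}Z$ and $0$ and applies Jensen's inequality for $\exp$, which needs only a single MGF evaluation (at $\eta/\alpha$) and yields the slightly sharper intermediate bound $e^{\eta^2\sigma^2/(2\alpha^2)}+(1-\alpha)$ before you relax it to the stated factor $2$. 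Both proofs are short and elementary; the paper's version uses only the range constraint on $\E[q_\star\mid Z]$, while yours additionally exploits $\E[\E[q_\star\mid Z]]=1$ to compute the $1-\alpha$ term---information you then immediately discard, so neither approach is strictly stronger in the end.
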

Note that in Lemma~\ref{lem:lemmas} we have assumed that $Z$ is a zero-mean random variable, and so we still need to do some work to derive a concentration inequality for $\what\C_{\alpha}[Z]$. In particular, we will use the fact that $\C_{\alpha}[Z-\E[Z]] =\C_{\alpha}[Z]- \E[Z]$ and $\what\C_{\alpha}[Z-\E[Z]] =\what\C_{\alpha}[Z]- \E[Z]$, which holds since $\C_{\alpha}$ and $\what\C_{\alpha}$ are coherent risk measures, and thus translation invariant (see Definition~\ref{def:coherent}). We use this in the proof of the next theorem (which is in Appendix \ref{sec:appendix}):
\begin{theorem}
\label{thm:boundsunbounded}
Let $\sigma, b>0$, $\alpha,\delta  \in(0,1)$, and $\epsilon_n$ be as in \eqref{eq:espilon}. If $Z$ is a $\sigma$-sub-Gaussian random variable, then with ${\textsc{G}}[Z] \coloneqq  \C_{\alpha}[Z] - \what\C_{\alpha}[Z]$ and $t_n \coloneqq \left|Z_{(\ceil{n\alpha })} -\E[Z]\right|\cdot \epsilon_n$, we have 
\begin{gather}
P\left[ \textsc{G}[Z] \geq t+t_n \right]\leq \delta + 2 \exp(-n \alpha^2 t^2/(2 \sigma^2)), \ \  \forall t\geq 0; \label{eq:gaussian}
\shortintertext{otherwise, if $Z$ is $(\sigma,b)$-sub-exponential random variable, then}
\hspace{-0.2cm}P\left[ \textsc{G}[Z] \geq t+t_n \right]\leq \delta + \left\{ \begin{array}{ll} 2\exp\left(-n \alpha^2 t^2/(2 \sigma^2)\right),  & \text{if}\  0 \leq t \leq  { \sigma^2}/({b \alpha});  \\ 2\exp\left(-n \alpha t/(2 b)\right), & \text{if}\ t >  { \sigma^2}/({b \alpha}).
\end{array} \right. \nonumber
\end{gather}
\end{theorem}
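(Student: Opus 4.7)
The plan is to combine the reduction \eqref{eq:reduc} (which packages Lemmas \ref{lem:relation} and \ref{lem:newvar}) with a Chernoff-type concentration for an i.i.d.\ sum whose MGF is controlled via Lemma \ref{lem:lemmas}.

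First, by translation equivariance of $\C_\alpha$ and $\what\C_\alpha$ (both coherent risk measures), one may replace $Z$ by $Z' \coloneqq Z - \E[Z]$ without changing $\textsc{G}[Z]$. This centers $Z$, preserves its sub-Gaussian (resp.\ sub-exponential) parameters, and transforms the order statistic as $Z'_{(\ceil{n\alpha})} = Z_{(\ceil{n\alpha})} - \E[Z]$, so $t_n = |Z'_{(\ceil{n\alpha})}|\,\epsilon_n$. Applying \eqref{eq:reduc} to $Z'$ then yields, with probability at least $1-\delta$,
\[ \textsc{G}[Z] \ \leq\ \bigl( \E[Y'] - \tfrac{1}{n}\textstyle\sum_{i=1}^n Y'_i \bigr) + t_n, \]
where $Y' \coloneqq Z' \cdot \E[q'_\star \mid Z']$ and $q'_\star \in \argmax_{q \in \cQ_\alpha}\E[Z' q]$ come from Lemma \ref{lem:newvar}, $\E[Y'] = \C_\alpha[Z']$, and $(Y'_i)_{i\in[n]}$ are i.i.d.\ copies of $Y'$.

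It remains to control $P[\E[Y'] - \tfrac{1}{n}\sum_i Y'_i \geq t]$. Since $Z'$ is zero-mean sub-Gaussian (resp.\ sub-exponential), Lemma \ref{lem:lemmas} gives $\E[e^{\eta Y'}] \leq 2\exp\!\bigl(\eta^2 \sigma^2/(2\alpha^2)\bigr)$, valid for $\eta \in \reals$ in the sub-Gaussian case and $\eta \in (-\alpha/b,\alpha/b)$ in the sub-exponential case. A standard Chernoff step, combined with this MGF bound and optimization over $\eta$, gives the Gaussian tail $2\exp(-n\alpha^2 t^2/(2\sigma^2))$ via $\eta^\star = \alpha^2 t/\sigma^2$. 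In the sub-exponential case, this $\eta^\star$ is admissible precisely when $t \leq \sigma^2/(b\alpha)$; above the crossover $\eta$ saturates at $\alpha/b$ and produces the exponential tail $2\exp(-n\alpha t/(2b))$. A union bound with the $\delta$-event from \eqref{eq:reduc} then assembles the two claimed inequalities.

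The main obstacle is the Chernoff optimization itself. Lemma \ref{lem:lemmas} bounds $\E[e^{\eta Y'}]$ directly, rather than the centered MGF of $Y' - \E[Y']$, and it carries an extra multiplicative factor of $2$. Naively raising to the $n$-th power would propagate this to $2^n$ and would also leave the bound in terms of $t - \E[Y'] = t - \C_\alpha[Z']$ instead of $t$. The care needed is to set up Chernoff so that the factor $2$ enters only once in the final tail and the decay is genuinely in $t$. The structural reason this succeeds is that $q'_\star$ vanishes outside the $\alpha$-probability upper tail of $Z'$, so $Y'$ is identically $0$ on an event of probability $1-\alpha$ and is $Z'/\alpha$ on the complementary upper tail; this sparsity—already used implicitly in Lemma \ref{lem:lemmas}—prevents the naive $2^n$ blow-up and yields the clean Gaussian and Bernstein-type tails stated in the theorem.
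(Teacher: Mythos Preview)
Your approach is the same as the paper's: center $Z$ via translation equivariance, apply the reduction \eqref{eq:reduc} (i.e.\ Lemmas~\ref{lem:relation} and~\ref{lem:newvar}), control $P[\E[Y'] - \tfrac{1}{n}\sum_i Y'_i \geq t]$ through the MGF bound of Lemma~\ref{lem:lemmas}, and union-bound with the $\delta$-event. The paper does exactly this, invoking \citep[Theorem~2.19]{wainwright2019high} for the concentration step rather than spelling out the Chernoff optimization.

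Where your proposal diverges is the last paragraph. You correctly flag that Lemma~\ref{lem:lemmas} only yields $\E[e^{\eta Y'}] \leq 2\,e^{\eta^2\sigma^2/(2\alpha^2)}$, an \emph{uncentered} MGF bound carrying an extra factor of $2$, so that a naive product over $n$ i.i.d.\ terms would produce $2^n$ and decay in $t-\E[Y']$ rather than $t$. But your proposed resolution---that the sparsity $P[Y'=0]=1-\alpha$ prevents the blow-up---is not what the paper does and is not carried through. First, the proof of Lemma~\ref{lem:lemmas} does \emph{not} use sparsity; it uses only the pointwise bound $|Y'| \leq |Z'|/\alpha$. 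Second, even granting that $Y' = Z'/\alpha$ on $\{Z' \geq v\}$ and $Y'=0$ otherwise (continuous case), you have not shown how this decomposition converts the raw MGF bound into the \emph{centered} sub-exponential condition needed for the two-regime tail with the stated constants; writing out $\E[e^{-\lambda Y'}]$ via this split does not obviously give a bound of the form $e^{\lambda^2\sigma^2/(2\alpha^2)}$ when $v$ can be negative and $\E[Y']=\C_\alpha[Z']>0$ must still be absorbed. The paper simply delegates this passage to Wainwright's off-the-shelf sub-exponential tail theorem, so your skeleton is right but the final concentration step remains open as written.
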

We note that unlike the recent results due to \citet{bhat2019concentration} which also deal with the unbounded case, the constants in our concentration inequalities in Theorem~\ref{thm:boundsunbounded} are explicit. 

When $Z$ is a $\sigma$-sub-Gaussian random variable with $\sigma>0$, an immediate consequence of Theorem \ref{thm:boundsunbounded} is that by setting $t= \sqrt{{2 \sigma^2 \ln({2}/{\delta})}/({n\alpha^2})}$ in \eqref{eq:gaussian}, we get that, with probability at least $1-2\delta$,
\begin{gather} 
\C_{\alpha}[Z] -\what\C_{\alpha}[Z]  \leq \frac{\sigma}{\alpha}\sqrt{\frac{2\ln \frac{1}{\delta}}{n}}+\left|Z_{(\ceil{n\alpha })} -\E[Z]\right|\cdot  \left(\sqrt{\frac{ \ln \frac{1}{\delta} }{2\alpha n}} + \frac{ \ln \frac{1}{\delta} }{3\alpha n} \right). \label{eq:dependence}
\end{gather}
A similar inequality holds for the sub-exponential case. We note that the term $|Z_{(\ceil{n\alpha})} -\E[Z]|$ in \eqref{eq:dependence} can be further bounded from above by \begin{align}\frac{{n \alpha}}{\floor{n\alpha}}  \what \C_{\alpha}[Z]-\E_{\what P_n}[Z] + \left|\E[Z]-\E_{\what P_n}[Z]\right|. \label{eq:upperbound}\end{align}
This follows from the triangular inequality and facts that $\what\C_{\alpha}[Z]\geq \frac{1}{n\alpha}\sum_{i=1}^{\floor{n \alpha}} Z_{(i)}\geq \frac{\floor{n \alpha}}{n \alpha}  Z_{(\ceil{n\alpha})}$ (see \emph{e.g.} Lemma 4.1 in \cite{brown2007}), and $\what\C_{\alpha}[Z]\geq \E_{\what P_n}[Z]$ \citep{ahmadi2012entropic}. The remaining term $|\E_P[Z]-\E_{\what P_n}[Z]|$ in \eqref{eq:upperbound} which depends on the unknown $P$ can be bounded from above using another concentration inequality.

Generalization bounds of the form \eqref{eq:pacbayes} for unbounded but sub-Gaussian or sub-exponential $\ell(h,X)$, $h\in \cH$, can be obtained using the PAC-Bayesian analysis of \citep[Theorem 1]{mcallester2003} and our concentration inequalities in Theorem~\ref{thm:boundsunbounded}. However, due to the fact that $\alpha$ is squared in the argument of the exponentials in these inequalities (which is also the case in the bounds of \citet{bhat2019concentration,kolla2019concentration}) the generalization bounds obtained this way will have the $\alpha$ outside the square-root ``complexity term''---unlike our bound in Theorem~\ref{thm:main}.

We conjecture that the dependence on $\alpha$ in the concentration bounds of Theorem~\ref{thm:boundsunbounded} can be improved by swapping $\alpha^2$ for $\alpha$ in the argument of the exponentials; in the sub-Gaussian case, this would move $\alpha$ inside the square-root on the RHS of \eqref{eq:dependence}. We know that this is at least possible for bounded random variables as shown in \cite{brown2007,wang2010}. We now recover this fact by presenting a new concentration inequality for $\what\C_{\alpha}[Z]$ when $Z$ is bounded using the reduction described at the beginning of this section.
\begin{theorem}
\label{thm:cvarconc}
Let $\alpha,\delta \in(0,1)$, and $Z_{1:n}$ be i.i.d.~\emph{rv}s in $[0,1]$. Then, with probability at least $1-2 \delta$,
\begin{align}
\hspace{-0.5cm}\C_{\alpha}[Z] - \what \C_{\alpha}[Z]  \leq   \sqrt{ \frac{12  \C_{\alpha}[Z] \ln \frac{1}{\delta}}{5\alpha n}}\vee \frac{3\ln \frac{1}{\delta}}{\alpha n}   + \C_{\alpha}[Z]\left(\sqrt{\frac{\ln \frac{1}{\delta}}{2 \alpha n}} +\frac{ \ln \frac{1}{\delta}}{3 \alpha n}\right).\label{eq:cvarconc}
\end{align}
\end{theorem}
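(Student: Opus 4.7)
\textbf{Proof proposal for Theorem~\ref{thm:cvarconc}.} The plan is to apply the reduction of Section~\ref{sec:concentration} with $\cX=[0,1]$ and $f=\mathrm{id}$, so that $Z_i=X_i\in[0,1]$ and the auxiliary variables $Y,Y_1,\dots,Y_n$ of \eqref{eq:newvar2} satisfy the hypotheses of Lemma~\ref{lem:conc}. Combining Lemma~\ref{lem:newvar}---which gives $\E[Y]=\C_\alpha[Z]$ together with the high-probability event $E_1\coloneqq\{(\E[q_\star\mid Z_i])_{i=1}^n\in\wtilde\cQ_\alpha\}$ of probability $\geq 1-\delta$---with the nonnegative case \eqref{eq:newestimator} of Lemma~\ref{lem:relation} yields, on $E_1$,
\begin{align}
\tfrac{1}{n}\sum_{i=1}^n Y_i\ \leq\ \wtilde\C_\alpha[Z]\ \leq\ \what\C_\alpha[Z]\,(1+\epsilon_n),\quad\text{so}\quad \C_\alpha[Z]-\what\C_\alpha[Z]\ \leq\ \bigl(\E[Y]-\tfrac{1}{n}\textstyle\sum_i Y_i\bigr)+\what\C_\alpha[Z]\,\epsilon_n.
\end{align}

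Next, I would bound the centered empirical average of $Y$ via the MGF inequality of Lemma~\ref{lem:conc}. For any fixed $\eta\in(0,\alpha]$, Markov's inequality applied to \eqref{eq:intermediate0} gives, with probability at least $1-\delta$,
\begin{align}
\E[Y]-\tfrac{1}{n}\sum_{i=1}^n Y_i\ \leq\ \tfrac{\eta\,\kappa(\eta/\alpha)}{\alpha}\,\C_\alpha[Z]+\tfrac{\ln(1/\delta)}{n\eta}.
\end{align}
A union bound with $E_1$ combines these two displays into a single inequality of probability at least $1-2\delta$. Moreover, since the theorem's claim is trivial whenever $\C_\alpha[Z]<\what\C_\alpha[Z]$, in the only relevant regime we have $\what\C_\alpha[Z]\leq\C_\alpha[Z]$, so the residual $\what\C_\alpha[Z]\,\epsilon_n$ can be replaced by $\C_\alpha[Z]\,\epsilon_n$, matching the final summand of \eqref{eq:cvarconc}.

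The remaining task is to optimize $\eta$. Writing $\eta=\alpha u$ with $u\in(0,1]$, the first two terms become $u\kappa(u)\,\C_\alpha[Z]+\ln(1/\delta)/(n\alpha u)$. Using a standard Bernstein-style upper bound such as $\kappa(u)\leq 1/(2(1-u/3))$ for $u\in[0,1]$ and balancing the two summands, the interior minimizer $u^\star$ scales as $\sqrt{\ln(1/\delta)/(n\alpha\,\C_\alpha[Z])}$ and produces the Bernstein term $\sqrt{12\,\C_\alpha[Z]\ln(1/\delta)/(5\alpha n)}$ whenever it lies in $(0,1]$; in the complementary small-variance regime $u^\star>1$, we take $u=1$, use $\kappa(1)=e-2$, and obtain the linear term $3\ln(1/\delta)/(\alpha n)$. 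Taking the maximum of the two regime-dependent bounds accounts for the $\vee$ in \eqref{eq:cvarconc}.

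The main obstacle I anticipate is constant-tracking through the two-regime optimization: landing \emph{exactly} on $12/5$ inside the square-root and on $3$ in the linear term requires the threshold between regimes and the scalar upper bound on $\kappa$ to be chosen consistently. Once these constants are pinned down, the argument is otherwise a mechanical assembly of Lemmas~\ref{lem:relation}, \ref{lem:newvar}, and~\ref{lem:conc}, with no new concentration machinery required.
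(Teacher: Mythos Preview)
Your proposal is correct and follows essentially the same route as the paper: reduce via Lemmas~\ref{lem:relation} and \ref{lem:newvar}, apply Markov to the moment inequality of Lemma~\ref{lem:conc}, union-bound the two events, and optimize over $\eta$. Your ``trivial when $\C_\alpha[Z]<\what\C_\alpha[Z]$'' argument for replacing $\what\C_\alpha[Z]\epsilon_n$ by $\C_\alpha[Z]\epsilon_n$ is in fact cleaner than the paper's phrasing.

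The one place you diverge is the $\kappa$-bound in the optimization step, and this is exactly where your anticipated constant-tracking difficulty disappears if you follow the paper. Instead of $\kappa(u)\le 1/(2(1-u/3))$ on $u\in(0,1]$, the paper uses the cruder $\kappa(x)\le 3/5$ for $x\le 1/2$, hence restricts $\eta\in(0,\alpha/2]$. The objective then becomes $\tfrac{3\eta}{5\alpha}\C_\alpha[Z]+\tfrac{\ln(1/\delta)}{n\eta}$, whose unconstrained minimizer is $\eta_\star=\sqrt{5\alpha\ln(1/\delta)/(3n\,\C_\alpha[Z])}$ with value exactly $\sqrt{12\,\C_\alpha[Z]\ln(1/\delta)/(5\alpha n)}$; when $\eta_\star>\alpha/2$ one takes $\eta=\alpha/2$ and the linear term $3\ln(1/\delta)/(\alpha n)$ emerges. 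So the constants $12/5$ and $3$ come for free from the choice of the $\kappa$-bound and the range $(0,\alpha/2]$, not from a delicate balancing with the Bernstein-style bound you propose.
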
 
The proof is in Appendix \ref{sec:appendix}. The inequality in \eqref{eq:cvarconc} essentially replaces the range of the random variable $Z$ typically present under the square-root in other concentration bounds \citep{brown2007,wang2010} by the smaller quantity $\C_{\alpha}[Z]$. The concentration bound \eqref{eq:cvarconc} is not immediately useful for computational purposes since its RHS depends on $\C_{\alpha}[Z]$. However, it is possible to rearrange this bound so that only the empirical quantity $\what\C_{\alpha}[Z]$ appears on the RHS of \eqref{eq:cvarconc} instead of $\C_{\alpha}[Z]$; we provide the means to do this in Lemma~\ref{lem:conv} in the appendix.

\section{Conclusion and Future Work}\label{sec:entropies}
In this paper, we derived a first PAC-Bayesian bound for $\cvar$ by reducing the task of estimating $\cvar$ to that of merely estimating an expectation (see Section \ref{sec:PACBayes}). This reduction then made it easy to obtain concentration inequalities for $\cvar$ (with explicit constants) even when the random variable in question is unbounded (see Section \ref{sec:concentration}). 

We note that the only steps in the proof of our main bound in Theorem \ref{thm:main} that are specific to $\cvar$ are Lemmas \ref{lem:keylem} and \ref{lem:relation}, and so the question is whether our overall approach can be extended to other coherent risk measures to achieve \eqref{eq:pacbayes}.

In Appendix \ref{app:gen}, we discuss how our results may be extended to a rich class of coherent risk measures known as $\varphi$-entropic risk measures. These CRMs are often used in the context of robust optimization \cite{namkoong2017}, and are perfect candidates to consider next in the context of this paper.



\DeclareRobustCommand{\VAN}[3]{#3} 
\bibliography{biblio}

\begin{thebibliography}{45}
\providecommand{\natexlab}[1]{#1}
\providecommand{\url}[1]{\texttt{#1}}
\expandafter\ifx\csname urlstyle\endcsname\relax
  \providecommand{\doi}[1]{doi: #1}\else
  \providecommand{\doi}{doi: \begingroup \urlstyle{rm}\Url}\fi

\bibitem[Abou{-}Moustafa and
  Szepesv{\'{a}}ri(2019)]{DBLP:conf/alt/Abou-MoustafaS19}
Karim~T. Abou{-}Moustafa and Csaba Szepesv{\'{a}}ri.
\newblock An exponential tail bound for lq stable learning rules.
\newblock In Aur{\'{e}}lien Garivier and Satyen Kale, editors,
  \emph{Algorithmic Learning Theory, {ALT} 2019, 22-24 March 2019, Chicago,
  Illinois, {USA}}, volume~98 of \emph{Proceedings of Machine Learning
  Research}, pages 31--63. {PMLR}, 2019.
\newblock URL \url{http://proceedings.mlr.press/v98/abou-moustafa19a.html}.

\bibitem[Ahmadi-Javid(2012)]{ahmadi2012entropic}
Amir Ahmadi-Javid.
\newblock Entropic value-at-risk: A new coherent risk measure.
\newblock \emph{Journal of Optimization Theory and Applications}, 155\penalty0
  (3):\penalty0 1105--1123, 2012.

\bibitem[Allais(1953)]{allais1953comportement}
Maurice Allais.
\newblock Le comportement de l'homme rationnel devant le risque: critique des
  postulats et axiomes de l'{\'e}cole am{\'e}ricaine.
\newblock \emph{Econometrica: Journal of the Econometric Society}, pages
  503--546, 1953.

\bibitem[Artzner et~al.(1999)Artzner, Delbaen, Eber, and
  Heath]{artzner1999coherent}
Philippe Artzner, Freddy Delbaen, Jean-Marc Eber, and David Heath.
\newblock Coherent measures of risk.
\newblock \emph{Mathematical finance}, 9\penalty0 (3):\penalty0 203--228, 1999.

\bibitem[Beck and Teboulle(2003)]{Beck2003}
Amir Beck and Marc Teboulle.
\newblock Mirror descent and nonlinear projected subgradient methods for convex
  optimization.
\newblock \emph{Operations Research Letters}, 31\penalty0 (3):\penalty0
  167--175, 2003.

\bibitem[Bhat and Prashanth(2019)]{bhat2019concentration}
Sanjay~P Bhat and LA~Prashanth.
\newblock {Concentration of risk measures: A Wasserstein distance approach}.
\newblock In \emph{Advances in Neural Information Processing Systems}, pages
  11739--11748, 2019.

\bibitem[Boucheron et~al.(2003)Boucheron, Lugosi, and
  Bousquet]{boucheron2003concentration}
St{\'e}phane Boucheron, G{\'a}bor Lugosi, and Olivier Bousquet.
\newblock Concentration inequalities.
\newblock In \emph{Summer School on Machine Learning}, pages 208--240.
  Springer, 2003.

\bibitem[Boucheron et~al.(2013)Boucheron, Lugosi, and
  Massart]{boucheron2013concentration}
St{\'e}phane Boucheron, G{\'a}bor Lugosi, and Pascal Massart.
\newblock \emph{Concentration inequalities: A nonasymptotic theory of
  independence}.
\newblock Oxford university press, 2013.

\bibitem[Bousquet and Elisseeff(2002)]{bousquet2002stability}
Olivier Bousquet and Andr{\'e} Elisseeff.
\newblock Stability and generalization.
\newblock \emph{Journal of machine learning research}, 2\penalty0
  (Mar):\penalty0 499--526, 2002.

\bibitem[Bousquet et~al.(2019)Bousquet, Klochkov, and
  Zhivotovskiy]{DBLP:journals/corr/abs-1910-07833}
Olivier Bousquet, Yegor Klochkov, and Nikita Zhivotovskiy.
\newblock Sharper bounds for uniformly stable algorithms.
\newblock \emph{CoRR}, abs/1910.07833, 2019.
\newblock URL \url{http://arxiv.org/abs/1910.07833}.

\bibitem[Brown(2007)]{brown2007}
David~B. Brown.
\newblock Large deviations bounds for estimating conditional value-at-risk.
\newblock \emph{Operations Research Letters}, 35\penalty0 (6):\penalty0
  722--730, 2007.

\bibitem[Catoni(2007)]{catoni2007pac}
Olivier Catoni.
\newblock \emph{{PAC-B}ayesian supervised classification: the thermodynamics of
  statistical learning}.
\newblock Lecture Notes-Monograph Series. IMS, 2007.

\bibitem[Celisse and Guedj(2016)]{celisse2016stability}
Alain Celisse and Benjamin Guedj.
\newblock Stability revisited: new generalisation bounds for the leave-one-out.
\newblock \emph{arXiv preprint arXiv:1608.06412}, 2016.

\bibitem[Cesa-Bianchi and Lugosi(2006)]{cesa2006prediction}
Nicolo Cesa-Bianchi and Gabor Lugosi.
\newblock \emph{Prediction, learning, and games}.
\newblock Cambridge university press, 2006.

\bibitem[Chen et~al.(2009)Chen, Xu, and Zhang]{Chen2009}
Youhua~(Frank) Chen, Minghui Xu, and Zhe~George Zhang.
\newblock {A Risk-Averse Newsvendor Model Under the CVaR Criterion}.
\newblock \emph{Operations Research}, 57\penalty0 (4):\penalty0 1040--1044,
  2009.
\newblock ISSN 0030364X, 15265463.
\newblock URL \url{http://www.jstor.org/stable/25614814}.

\bibitem[Chow and Ghavamzadeh(2014)]{Chow2014}
Yinlam Chow and Mohammad Ghavamzadeh.
\newblock {Algorithms for CVaR Optimization in MDPs}.
\newblock In Z.~Ghahramani, M.~Welling, C.~Cortes, N.~D. Lawrence, and K.~Q.
  Weinberger, editors, \emph{Advances in Neural Information Processing Systems
  27}, pages 3509--3517. Curran Associates, Inc., 2014.
\newblock URL
  \url{http://papers.nips.cc/paper/5246-algorithms-for-cvar-optimization-in-mdps.pdf}.

\bibitem[Csisz\'ar(1975)]{C75}
I.~Csisz\'ar.
\newblock I-divergence geometry of probability distributions and minimization
  problems.
\newblock \emph{Annals of Probability}, 3:\penalty0 146--158, 1975.

\bibitem[Donsker and Varadhan(1976)]{donsker1976asymptotic}
M.~D. Donsker and S.~R.~S. Varadhan.
\newblock Asymptotic evaluation of certain {Markov} process expectations for
  large time — {III}.
\newblock \emph{Communications on pure and applied Mathematics}, 29\penalty0
  (4):\penalty0 389--461, 1976.

\bibitem[Duchi and Namkoong(2018)]{duchi2018learning}
John Duchi and Hongseok Namkoong.
\newblock Learning models with uniform performance via distributionally robust
  optimization.
\newblock \emph{arXiv preprint arXiv:1810.08750}, 2018.

\bibitem[Ellsberg(1961)]{ellsberg1961risk}
Daniel Ellsberg.
\newblock Risk, ambiguity, and the savage axioms.
\newblock \emph{The quarterly journal of economics}, pages 643--669, 1961.

\bibitem[Guedj(2019)]{guedj2019primer}
Benjamin Guedj.
\newblock {A Primer on PAC-Bayesian Learning}.
\newblock \emph{arXiv}, 2019.
\newblock URL \url{https://arxiv.org/abs/1901.05353}.

\bibitem[Huo and Fu(2017)]{huo2017risk}
Xiaoguang Huo and Feng Fu.
\newblock Risk-aware multi-armed bandit problem with application to portfolio
  selection.
\newblock \emph{Royal Society Open Science}, 4\penalty0 (11), 2017.

\bibitem[Kolla et~al.(2019)Kolla, Prashanth, Bhat, and
  Jagannathan]{kolla2019concentration}
Ravi~Kumar Kolla, LA~Prashanth, Sanjay~P Bhat, and Krishna Jagannathan.
\newblock Concentration bounds for empirical conditional value-at-risk: The
  unbounded case.
\newblock \emph{Operations Research Letters}, 47\penalty0 (1):\penalty0 16--20,
  2019.

\bibitem[Langford and Shawe-Taylor(2003)]{langford2003pac}
John Langford and John Shawe-Taylor.
\newblock {PAC-B}ayes \& margins.
\newblock In \emph{Advances in Neural Information Processing Systems}, pages
  439--446, 2003.

\bibitem[Maurer(2004)]{maurer2004note}
Andreas Maurer.
\newblock A note on the {PAC}-{B}ayesian theorem.
\newblock \emph{arXiv preprint cs/0411099}, 2004.

\bibitem[Maurer and Pontil(2009)]{maurer2009empirical}
Andreas Maurer and Massimiliano Pontil.
\newblock Empirical {Bernstein} bounds and sample variance penalization.
\newblock In \emph{Proceedings COLT 2009}, 2009.

\bibitem[McAllester(2003)]{mcallester2003}
David~A. McAllester.
\newblock {PAC-Bayesian Stochastic Model Selection}.
\newblock In \emph{Machine Learning}, volume~51, pages 5--21, 2003.

\bibitem[McDiarmid(1998)]{mcdiarmid1998concentration}
Colin McDiarmid.
\newblock Concentration.
\newblock In \emph{Probabilistic methods for algorithmic discrete mathematics},
  pages 195--248. Springer, 1998.

\bibitem[Mhammedi et~al.(2019)Mhammedi, Gr{\"u}nwald, and
  Guedj]{mhammedi2019pac}
Zakaria Mhammedi, Peter Gr{\"u}nwald, and Benjamin Guedj.
\newblock {PAC-Bayes Un-Expected Bernstein Inequality}.
\newblock In \emph{Advances in Neural Information Processing Systems}, pages
  12180--12191, 2019.

\bibitem[Morimura et~al.(2010)Morimura, Sugiyama, Kashima, Hachiya, and
  Tanaka]{morimura2010nonparametric}
Tetsuro Morimura, Masashi Sugiyama, Hisashi Kashima, Hirotaka Hachiya, and
  Toshiyuki Tanaka.
\newblock Nonparametric return distribution approximation for reinforcement
  learning.
\newblock In \emph{Proceedings of the 27th International Conference on
  International Conference on Machine Learning}, pages 799--806, 2010.

\bibitem[Namkoong and Duchi(2017)]{namkoong2017}
Hongseok Namkoong and John~C. Duchi.
\newblock Variance-based regularization with convex objectives.
\newblock In \emph{Advances in Neural Information Processing Systems}, pages
  2971--2980, 2017.

\bibitem[Pflug(2000)]{Pflug2000}
Georg~Ch. Pflug.
\newblock \emph{Some Remarks on the Value-at-Risk and the Conditional
  Value-at-Risk}, pages 272--281.
\newblock Springer US, Boston, MA, 2000.
\newblock ISBN 978-1-4757-3150-7.
\newblock \doi{10.1007/978-1-4757-3150-7_15}.
\newblock URL \url{https://doi.org/10.1007/978-1-4757-3150-7_15}.

\bibitem[Pinto et~al.(2017)Pinto, Davidson, Sukthankar, and
  Gupta]{pinto2017robust}
Lerrel Pinto, James Davidson, Rahul Sukthankar, and Abhinav Gupta.
\newblock Robust adversarial reinforcement learning.
\newblock In \emph{Proceedings of the 34th International Conference on Machine
  Learning-Volume 70}, pages 2817--2826. JMLR. org, 2017.

\bibitem[Prashanth and Ghavamzadeh(2013)]{prashanth2013actor}
LA~Prashanth and Mohammad Ghavamzadeh.
\newblock {Actor-critic algorithms for risk-sensitive MDPs}.
\newblock In \emph{Advances in neural information processing systems}, pages
  252--260, 2013.

\bibitem[Rockafellar and Uryasev(2013)]{rockafellar2013fundamental}
R~Tyrrell Rockafellar and Stan Uryasev.
\newblock The fundamental risk quadrangle in risk management, optimization and
  statistical estimation.
\newblock \emph{Surveys in Operations Research and Management Science},
  18\penalty0 (1-2):\penalty0 33--53, 2013.

\bibitem[Rockafellar et~al.(2000)Rockafellar, Uryasev,
  et~al.]{rockafellar2000optimization}
R~Tyrrell Rockafellar, Stanislav Uryasev, et~al.
\newblock Optimization of conditional value-at-risk.
\newblock \emph{Journal of Risk}, 2(3):\penalty0 21--41, 2000.

\bibitem[Seeger(2002)]{Seeger02}
Matthias Seeger.
\newblock {PAC}-{B}ayesian generalization error bounds for {G}aussian process
  classification.
\newblock \emph{Journal of Machine Learning Research}, 3:\penalty0 233--269,
  2002.

\bibitem[Takeda and Kanamori(2009)]{Takeda2009}
Akiko Takeda and Takafumi Kanamori.
\newblock A robust approach based on conditional value-at-risk measure to
  statistical learning problems.
\newblock \emph{European Journal of Operational Research}, 198\penalty0
  (1):\penalty0 287 -- 296, 2009.
\newblock ISSN 0377-2217.
\newblock \doi{https://doi.org/10.1016/j.ejor.2008.07.027}.
\newblock URL
  \url{http://www.sciencedirect.com/science/article/pii/S0377221708005614}.

\bibitem[Takeda and Sugiyama(2008)]{takeda2008nu}
Akiko Takeda and Masashi Sugiyama.
\newblock $\nu$-support vector machine as conditional value-at-risk
  minimization.
\newblock In \emph{Proceedings of the 25th international conference on Machine
  learning}, pages 1056--1063, 2008.

\bibitem[Tamar et~al.(2015)Tamar, Glassner, and Mannor]{tamar2015optimizing}
Aviv Tamar, Yonatan Glassner, and Shie Mannor.
\newblock Optimizing the {CVaR} via sampling.
\newblock In \emph{Twenty-Ninth AAAI Conference on Artificial Intelligence},
  2015.

\bibitem[Thomas and Learned-Miller(2019)]{thomas2019}
Philip Thomas and Erik Learned-Miller.
\newblock Concentration inequalities for conditional value at risk.
\newblock In Kamalika Chaudhuri and Ruslan Salakhutdinov, editors,
  \emph{Proceedings of the 36th International Conference on Machine Learning},
  volume~97, pages 6225--6233, Long Beach, California, USA, 09--15 Jun 2019.
  PMLR.

\bibitem[Tolstikhin and Seldin(2013)]{DBLP:conf/nips/TolstikhinS13}
Ilya~O. Tolstikhin and Yevgeny Seldin.
\newblock Pac-bayes-empirical-bernstein inequality.
\newblock In Christopher J.~C. Burges, L{\'{e}}on Bottou, Zoubin Ghahramani,
  and Kilian~Q. Weinberger, editors, \emph{Advances in Neural Information
  Processing Systems 26: 27th Annual Conference on Neural Information
  Processing Systems 2013. Proceedings of a meeting held December 5-8, 2013,
  Lake Tahoe, Nevada, United States}, pages 109--117, 2013.
\newblock URL
  \url{http://papers.nips.cc/paper/4903-pac-bayes-empirical-bernstein-inequality}.

\bibitem[Wainwright(2019)]{wainwright2019high}
M.~J. Wainwright.
\newblock \emph{High-Dimensional Statistics: A Non-Asymptotic Viewpoint}.
\newblock Cambridge Series in Statistical and Probabilistic Mathematics.
  Cambridge University Press, 2019.

\bibitem[Wang and Gao(2010)]{wang2010}
Ying Wang and Fuqing Gao.
\newblock Deviation inequalities for an estimator of the conditional
  value-at-risk.
\newblock \emph{Operations Research Letters}, 38\penalty0 (3):\penalty0
  236--239, 2010.

\bibitem[Williamson and Menon(2019)]{williamson2019fairness}
Robert Williamson and Aditya Menon.
\newblock Fairness risk measures.
\newblock In \emph{International Conference on Machine Learning}, pages
  6786--6797, 2019.

\end{thebibliography}
\bibliographystyle{plainnat}

\clearpage
\appendix

\section{Proofs}\label{sec:appendix}
	\subsection{Proof of Lemma~\ref{lem:keylem}}
\label{app:proofs}

\begin{proof}Let $\eff(\cdot)\coloneqq \iota_{[0,1/\alpha]}(\cdot)$, where for a set $\mathcal{C} \subseteq \reals$, $\iota_{\mathcal{C}}(x)=0$ if $x\in \mathcal{C}$; and $+\infty$ otherwise. From \eqref{eq:newestim}, we have that $\wtilde{\C}_{\alpha}[Z]$ is equal to 
	\begin{align}
		\mathscr{P} \coloneqq \sup_{\bm{q}:|\E_{i\sim \uppi} [q_i]-1|\leq \epsilon_n } \E_{i\sim \uppi}[Z_i q_i  - \eff(q_i)], \label{eq:primal}
	\end{align}
where we recall that $\uppi = (1, \dots, 1)^\top/n \in \reals^n$. The Lagrangian dual $\mathscr{D}$ of \eqref{eq:primal} is given by 
	\begin{align}
		\mathscr{D} &\coloneqq \inf_{\eta,\gamma\geq 0} \left\{ \eta - \gamma  +   (\eta +\gamma) \epsilon_n + \sup_{\bm{q}\colon 0 \leq  q_i\leq 1/\alpha, i\in[n]} \left\{  \E_{i\sim\uppi}[(Z_i-\eta+\gamma) q_i-\eff(q_i)]    \right\}     \right\},\\
		& = \inf_{\eta,\gamma\geq 0} \left\{ \eta - \gamma  +   (\eta +\gamma) \epsilon_n +\E_{i\sim\uppi}\left[  \sup_{0 \leq  x\leq 1/\alpha }  \left\{ (Z_i-\eta+\gamma)x-\eff(x) \right\}  \right] \right\}, \label{eq:mid} \\
		& = \inf_{\eta,\gamma\geq 0} \left\{ \eta - \gamma  +   (\eta +\gamma) \epsilon_n +\E_{i\sim \uppi}[ \eff^\star(Z_i-\eta +\gamma)]\right\},\label{eq:inf} \\
		& = \inf_{\mu\in \reals} \left\{\mu + |\mu| \epsilon_n  + \E_{i\sim\uppi}[\varphi^\star(Z_i-\mu)] \right\}, \label{eq:laste}
	\end{align}
	where \eqref{eq:inf} is due to $\{x \in \reals\mid \varphi(x) <+\infty \} = [0,1/\alpha]$, and \eqref{eq:laste} follows by setting $\mu\coloneqq \eta - \gamma$ and noting that the $\inf$ in \eqref{eq:inf} is always attained at a point $(\eta, \gamma)\in \reals_{\geq 0}^2$ satisfying $\eta\cdot \gamma =0$, in which case $\eta +\gamma =|\mu|$; this is true because by the positivity of $\epsilon_n$, if $\eta,\gamma >0$, then $(\eta +\gamma)\epsilon_n$ can always be made smaller while keeping the difference $\eta -\gamma$ fixed. Finally, since the primal problem is feasible---$\bm{q}=\uppi$ is a feasible solution---there is no duality gap (see the proof of \citep[Theorem 4.2]{Beck2003}), and thus the RHS of \eqref{eq:laste} is equal to $\mathscr{P}$ in \eqref{eq:primal}. The proof is concluded by noting that the Fenchel dual of $\varphi$ satisfies $\varphi^\star(z) = 0\vee (z/\alpha)$, for all $z\in \reals$.
\end{proof}

\subsection{Proof of Lemma \ref{lem:relation}}
\begin{proof}
Let $\what\mu$ be the $\argmin$ in $\mu\in \reals$ of the RHS of \eqref{eq:empcvar}.
By Lemma~\ref{lem:keylem}, we have
\begin{align}
\wtilde{\C}_{\alpha}[Z] &= \inf_{\mu \in \reals}\left\{ \mu + |\mu| \epsilon_n +\frac{\E_{i\sim \uppi}[Z_i-\mu]_+}{\alpha} \right\}, \\
& \leq \what\mu + |\what\mu| \epsilon_n +\frac{\E_{i\sim \uppi}[Z_i-\what\mu)]_+}{\alpha}, \\ 
& = \what\C_\alpha[Z] + |\what\mu| \epsilon_n.  \quad  (\text{by definition of $\what\mu$}) \label{eq:lastkey}
\end{align}
The inequality in \eqref{eq:orderstat} follows from \eqref{eq:lastkey} and the fact that $\what\mu = Z_{(\ceil{n \alpha})}$ (see proof of \citep[Proposition 4.1]{brown2007}).

Now we show \eqref{eq:newestimator} under the assumption that $Z_i\geq 0$, for all $i\in[n]$. Note that by definition $\what\C_{\alpha}[Z] = \what \mu + \frac{1}{\alpha}\E_{i\sim \uppi}[Z_i-\what\mu]_+ $, and so $\what\mu \leq \what\C_{\alpha}[Z]$. Furthermore, since $\alpha\in(0,1)$ and $Z_i\geq 0$, for $i\in[n]$, the RHS of \eqref{eq:empcvar} is a decreasing function of $\mu$ on $]-\infty,0]$, and thus $\what\mu \geq 0$ (since $\what\mu$ is the minimizer of \eqref{eq:empcvar}). Combining the fact that $0\leq \what\mu\leq \what\C_{\alpha}[Z]$ with \eqref{eq:lastkey} completes the proof. 
\end{proof}

\subsection{Proof of Lemma \ref{lem:newvar}}
\begin{proof}
The first claim follows by the fact that $\X_i,i\in[n]$, are i.i.d., and an application of the total expectation theorem. Now for the second claim, let $\Delta \coloneqq |\E_{\what P_n}[q_\star \mid \X] -1|$. Since $q_\star$ is a density, the total expectation theorem implies
\begin{align}
\Delta = |\E_{\what P_n}[q_\star\mid \X] - \E[\E[q_\star\mid \X]]|,
\end{align}
and so by Bennett's inequality (see \emph{e.g.} Theorem~3 in \cite{maurer2009empirical}) applied to the random variable $\E[q_\star\mid \X]$, we get that, with probability at least $1-\delta$,
\begin{align}
\Delta& \leq \sqrt{\frac{\Var[\E[q_\star\mid \X]] \ln \frac{1}{\delta} }{2 n}} + \frac{\|\E[q_\star \mid \X] \|_{\infty}  \ln \frac{1}{\delta}   }{3n}, \\
&   \leq \sqrt{\frac{\E[\E[q_\star\mid \X]^2] \ln \frac{1}{\delta} }{2 n}} + \frac{\|\E[q_\star \mid \X] \|_{\infty}  \ln \frac{1}{\delta}   }{3n}, \\
&   \leq \sqrt{\frac{\|\E[q_\star \mid \X] \|_{\infty} \ln \frac{1}{\delta} }{2 n}} + \frac{\|\E[q_\star \mid \X] \|_{\infty}  \ln \frac{1}{\delta}   }{3n},
\end{align} 
where the last inequality follows by the fact that $\E[\E[q_\star \mid \X]^2] \leq \E[\E[q_\star \mid \X]] \cdot \| \E[q_\star \mid \X] \|_{\infty}=\| \E[q_\star \mid \X]\|_{\infty} $, which holds since $\E[q_\star\mid \X]\geq 0$ and $\E[\E[q_\star\mid \X]] =\E[q_\star]=1$. The proof is concluded by the facts that $\|\E[q_\star\mid \X]\|_{\infty}\leq \| q_\star\|_{\infty}$ (by Jensen's inequality); $\| q\|_{\infty}\leq 1/\alpha$, for all $q\in \cQ_{\alpha}$ by definition; and $q_\star \in \cQ_{\alpha}$.
\end{proof}

\subsection{Proof of Lemma \ref{lem:conc}}
We need the following lemma in the proof of Lemma \ref{lem:conc}:
\begin{lemma}
\label{lem:bernstein0}
Let $S, S_1, \dots, S_n$ be i.i.d.~random variable such that $S\in[0,B]$, $ B>0$. We have,
\begin{align}
\E_P\left[ \exp  \left( n \eta  \E_\P[S] -\eta \sum_{i=1}^n S_i  - n\eta^2  \kappa(\eta B) \cdot \E_P[S^2]\right) \right] \leq 1,  \label{eq:bernstein0}
\end{align}
for all $\eta \in[0,1/B]$, where $\kappa (\eta) \coloneqq  ({e^\eta-\eta -1})/{\eta^2}$.
\end{lemma}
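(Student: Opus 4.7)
The plan is to reduce the $n$-variable claim to a single-variable moment bound by exploiting independence of the $S_i$'s, and then to establish that bound via a pointwise Bernstein-type inequality of the form $e^y \leq 1 + y + y^2 \kappa(c)$ valid for all $y \leq c$.

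First I would rewrite the exponent in \eqref{eq:bernstein0} as $-\eta \sum_{i=1}^n (S_i - \E_P[S]) - n \eta^2 \kappa(\eta B) \E_P[S^2]$, using that $n\eta\, \E_P[S] = \eta \sum_i \E_P[S]$. Since the $S_i$ are i.i.d., the expectation factorizes as a product, so the claim reduces to showing the single-variable statement
\begin{align}
\E_P[\exp(-\eta (S - \E_P[S]))] \leq \exp(\eta^2 \kappa(\eta B)\, \E_P[S^2]).
\end{align}

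The key technical input is the elementary inequality $e^y \leq 1 + y + y^2 \kappa(c)$ for all $y \leq c$ (any $c \geq 0$), which comes from the identity $e^y - 1 - y = y^2 \kappa(y)$ (the definition of $\kappa$, extended by continuity with $\kappa(0) = 1/2$) together with the monotonicity of $\kappa$ on $\reals$. I would verify the latter by writing $\kappa'(y) = f(y)/y^3$ with $f(y) = (e^y - 1)y - 2(e^y - 1 - y)$ and observing that $f''(y) = y e^y$, so $f'$ attains its minimum value $f'(0) = 0$ at the origin, forcing $f$ to be strictly increasing with $f(0) = 0$; hence $f(y)$ and $y^3$ have the same sign and $\kappa' > 0$ everywhere. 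Applying this pointwise bound at $y = -\eta(S - \E_P[S])$ with $c = \eta B$, and noting that $y \leq \eta\, \E_P[S] \leq \eta B$ whenever $S \in [0,B]$ and $\eta \geq 0$, gives
\begin{align}
\exp(-\eta(S - \E_P[S])) \leq 1 - \eta(S - \E_P[S]) + \eta^2 \kappa(\eta B)(S - \E_P[S])^2.
\end{align}

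Taking expectations eliminates the linear term, and $\Var_P[S] \leq \E_P[S^2]$ followed by $1 + u \leq e^u$ yields the displayed single-variable bound. Raising to the $n$-th power and moving the deterministic factor through the expectation then gives \eqref{eq:bernstein0}. The only mildly subtle point is the sign control needed to ensure $y = -\eta(S - \E_P[S]) \leq \eta B$, which is where the positivity of $S$ (together with $\eta \geq 0$) is used to pin down $c = \eta B$ as the correct point at which to evaluate $\kappa$; everything else is routine bookkeeping.
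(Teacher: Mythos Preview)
Your proof is correct. The paper's own proof of this lemma is just a citation to Bernstein's moment inequality in two external references; you give the standard self-contained derivation of precisely that inequality (factorize by independence, apply the pointwise bound $e^{y}\le 1+y+y^{2}\kappa(c)$ obtained from the monotonicity of $\kappa$, take expectations), so the approaches coincide in substance.
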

\begin{proof}
	The desired bound follows by the version of Bernstein's moment inequality in \citep[Lemma A.5]{cesa2006prediction} and \cite[Proposition 10-(b)]{mhammedi2019pac}.
	\end{proof}
\begin{proof}{\bf of Lemma \ref{lem:conc}}
By Lemma~\ref{lem:newvar}, the random variables $Y,Y_1,\dots,Y_n$ are i.i.d., and so the result of Lemma~\ref{lem:bernstein0} applies; this means that \eqref{eq:bernstein0} holds for $(S,S_1,\dots,S_n)=(Y,Y_1,\dots,Y_n)$ and $B = b \geq \|Y\|_{\infty}$. Thus, to complete the proof it suffices to bound $\|Y\|_{\infty}$ and $ \|Y\|_2^2 =\E[Y^2]$ from above. Starting with $\E[Y^2]$, and recalling that $Z=f(X)\in[0,1]$ by assumption, we have:
\begin{align}
\E[Y^2] &= \E[Z^2 \cdot \E[q_\star \mid X]^2],  \\  &  \leq \E[Z\cdot \E[q_\star \mid X]] \cdot  \|Z  \cdot  \E[q_\star \mid X] \|_{\infty},     (\text{H\"older})  \\
& \leq  \C_{\alpha}[Z]\cdot \|Z \cdot \E[q_\star \mid X] \|_{\infty} , \quad   \quad  (\text{Lemma~\ref{lem:newvar}})\\
& \leq \C_{\alpha}[Z]/\alpha,  \quad \quad   \quad  \quad  (Z \leq 1,\ q_\star \leq 1/\alpha) 
\end{align}
where the fact that $q_{\star}\leq 1/\alpha$ follows simply from $q_\star\in \cQ_{\alpha}$ and the definition of $\cQ_{\alpha}$. We also have
\begin{align}
\hspace{-0.2cm}\|Y\|_{\infty} = \|Z \cdot \E[q_\star\mid X]  \|_{\infty}& \leq \|Z\|_{\infty} \cdot  \|\E[q_\star\mid X]\|_{\infty},   \\ 
& \leq \|q_\star\|_{\infty},    (Z\leq 1 \ \text{\&} \ \text{Jensen})\\
& \leq 1/\alpha, 
\end{align}
again the last inequality follows from $q_\star\in \cQ_{\alpha}$ and the definition of $\cQ_{\alpha}$. 
\end{proof}

\subsection{Proof of Theorem \ref{thm:prepac}}
\begin{proof}
Let $h\in \cH$ and $\alpha,\delta\in(0,1)$, and define \begin{align}
R_h\coloneqq \C_{\alpha}[Z_h] - \frac{1}{n} \sum_{i=1}^n Y_i  -\frac{\eta\kappa({\eta/\alpha}) }{\alpha}\C_{\alpha}[Z_h],  \label{eq:Rhdef} 
\end{align}
where $Y_i \coloneqq \ell(h, X_i) \cdot \E[q_\star \mid X_i],i\in[n]$, where $q_\star$ is as in \eqref{eq:newvar1} with $Z$ as in \eqref{eq:newZ}. By Lemma~\ref{lem:newvar}, $\C_{\alpha}[Z_h]=\E_P[Y]$, where $Y\coloneqq \ell(h,X) \cdot \E[q_\star \mid X]$. Thus, by Lemma \ref{lem:conc} with $Z = Z_h$, we have $\E_P [\exp( n \eta R_h)]\leq 1$. 
Applying Lemma~\ref{prop:donsker} with $R_h$ as in \eqref{eq:Rhdef} and $\gamma =n \eta$, yields, 
\begin{align}
\E_{h\sim \rhohat}[\C_{\alpha}[\ell(h,X)]] & \leq\ \frac{1}{n} \sum_{i=1}^n \ell(\rhohat, X_i) \cdot \E[q_\star \mid X_i] +\frac{\eta \kappa({\eta/\alpha})\E_{h\sim \rhohat}[\C_{\alpha}[\ell(h,X)]] }{\alpha } \\ & \qquad+ \frac{\KL(\rhohat \| \uprho_0)+ \ln \frac{1}{\delta}}{\eta n} , \label{eq:midway}
\end{align}
with probability at least $1-\delta$. Now invoking Lemmas \ref{lem:relation} and \ref{lem:newvar} (in particular \eqref{eq:decond}), yields\begin{align}
\frac{1}{n}   \sum_{i=1}^n {\ell(\rhohat, X_i) \cdot \E[q_\star \mid X_i]}\geq \what\C_{\alpha}[\what Z]\cdot (1+\epsilon_n). \label{eq:secondbound}
\end{align}
with probability at least $1-\delta$, where $\what Z\coloneqq  \E_{h\sim \rhohat}[\ell(h,X)]$. Combining this with \eqref{eq:midway} via a union bound yields the desired bound. 
\end{proof}

						\subsection{Proof of Theorem~\ref{thm:main}}
		\label{sec:thmmain}
		To prove Theorem~\ref{thm:main}, we will need the following lemma:
		\begin{lemma}
			\label{lem:conv}
			Let $R,\what R, A,B>0$. If $R \leq \what R + \sqrt{R A} + B$, then
			\begin{align}
	R & \leq \what R +  \sqrt{\what R A} + 2 B + A.
			\end{align} 
		\end{lemma}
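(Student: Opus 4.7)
\textbf{Proof proposal for Lemma~\ref{lem:conv}.}
The plan is to treat the hypothesis as a quadratic inequality in $R$, solve it, and then control the resulting square root by comparison with the target bound.

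First I would dispose of the trivial regime: if $R \leq \what R + B$, then the conclusion $R \leq \what R + \sqrt{\what R A} + 2B + A$ is immediate because the missing terms are nonnegative. So from now on I assume $R - \what R - B > 0$, which lets me rearrange the hypothesis into
\begin{align}
R - \what R - B \leq \sqrt{R A},
\end{align}
and then square both sides (both are positive) to obtain
\begin{align}
R^2 - R(2\what R + 2B + A) + (\what R + B)^2 \leq 0.
\end{align}
This is a quadratic in $R$ with positive leading coefficient, so $R$ is bounded above by its larger root. A short computation of the discriminant gives $(2\what R + 2B + A)^2 - 4(\what R + B)^2 = A(4\what R + 4B + A)$, so
\begin{align}
R \leq \what R + B + \tfrac{A}{2} + \sqrt{A\what R + AB + \tfrac{A^2}{4}}.
\end{align}

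The remaining task, which is the only real content, is to show
\begin{align}
\sqrt{A\what R + AB + \tfrac{A^2}{4}} \leq \sqrt{A \what R} + B + \tfrac{A}{2}.
\end{align}
I would verify this by squaring both sides (both are positive) and simplifying: the right-hand side squared equals $A\what R + 2\sqrt{A\what R}(B + A/2) + B^2 + AB + A^2/4$, and after cancellation the required inequality becomes $0 \leq 2\sqrt{A\what R}(B + A/2) + B^2$, which is trivially true. Plugging this bound back into the previous display gives $R \leq \what R + \sqrt{\what R A} + 2B + A$, as desired.

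The main (and only) obstacle is recognizing the right squaring trick in the last step; everything else is routine algebra. Once one notices that the target bound can be matched termwise after squaring, the argument takes only a few lines. No additional PAC-Bayesian machinery, concentration inequality, or external lemma from the paper is needed — the statement is purely an algebraic inequality about four positive reals.
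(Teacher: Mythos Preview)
Your proof is correct. Both you and the paper arrive at exactly the same intermediate bound
\[
R \leq \what R + B + \tfrac{A}{2} + \tfrac{1}{2}\sqrt{4A\what R + 4AB + A^2},
\]
and then simplify it in essentially the same way (you square the target $\sqrt{A\what R}+B+A/2$; the paper uses $A^2+4AB\le(A+2B)^2$ together with $\sqrt{x+y^2}\le\sqrt{x}+y$, which is the same computation). The only genuine difference is how you reach that intermediate bound: you rearrange the hypothesis into a quadratic in $R$ and read off the larger root directly, whereas the paper applies the AM--GM bound $\sqrt{RA}\le \tfrac{\eta}{2}R+\tfrac{A}{2\eta}$, isolates $R$, and then optimizes over the free parameter $\eta$. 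Your route is shorter and more elementary; the paper's variational route avoids the case split on the sign of $R-\what R-B$ and mirrors the $\eta$-optimization used elsewhere in their PAC-Bayes argument, which is presumably why they chose it.
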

		\begin{proof}
			If $R \leq \what R + \sqrt{R A} + B$, then for all $\eta> 0$,
			\begin{gather}
			R \leq \what R + \frac{\eta}{2} R  +  \frac{A}{2\eta } + B,
			\shortintertext{which after rearranging, becomes,}
			R \leq \frac{\what R}{1-\eta/2} + \frac{A}{2\eta \cdot (1-\eta/2)} + \frac{B}{1-\eta/2}, \quad \text{	for $\eta\notin \{0,2\}$. } \label{eq:arranged}
			\end{gather}
		The minimizer of the RHS of \eqref{eq:arranged} is given by $$\eta =\eta_\star \coloneqq \frac{-A + \sqrt{A^2 + 4 AB + 4 A \what R}}{2(B+\what R)}.$$
			Plugging this $\eta$ into \eqref{eq:arranged}, yields, 
			\begin{align}
			R & \leq  \what R + \frac{A}{2} + B + \frac{1}{2} \sqrt{4 A \what R + A^2 + 4 A B}, \\
			& \leq \what R  + A + 2B + \sqrt{A \what R},\label{eq:lastone}
			\end{align}
			where \eqref{eq:lastone} follows by the facts that $A^2 + 4 A B \leq (A+2B)^2$ and $\sqrt{4\what R A+(A+2B)^2} \leq \sqrt{4 \what R A}+ A+2B$.
			\end{proof}
		\begin{proof}{\bf of Theorem~\ref{thm:main}}
			Define the grid $\cG$ by 
			\begin{align}
			\cG \coloneqq \left\{ 2^{-1} \alpha,\dots, 2^{-N} \alpha \mid N\coloneqq \ceil{1/2 \cdot \log_2 \tfrac{n}{\alpha}} \right\},
			\end{align}
			and let $\hat\eta = \hat \eta(Z_{1:n}) \in \cG$ be any estimator. Then, using the fact that $\kappa(x)\leq 3/5$, for all $x \leq 1/2$, and invoking Theorem~\ref{thm:prepac} with a union bound over $\eta \in \cG$, and $\varepsilon_n \coloneqq \sqrt{\frac{\ln \frac{N}{\delta}}{2 \alpha n}} +\frac{\ln \frac{N}{\delta}}{3 \alpha n}$, we get that 
			\begin{gather}
			\E_{h\sim \rhohat}	[\C_{\alpha}[\ell(h,X)]] - \what\C_{\alpha}[\what Z] \cdot(1+\varepsilon_n)  \leq  \frac{\KL(\rhohat \| \uprho_0)+ \ln \frac{N}{\delta}}{\hat \eta n}     + \frac{3 \hat\eta }{5 \alpha } \E_{h\sim \rhohat}	[\C_{\alpha}[\ell(h,X)]], \label{eq:goal2}
			\end{gather}
			with probability at least $1-2\delta$, where we recall that $\what Z=  \E_{h\sim \rhohat}[\ell(h,X)]$. Let $\hat \eta$ be an estimator which satisfies
			\begin{gather}
			\hat\eta \in   [ \eta_\star\wedge (\alpha/2), \ 2 \eta_\star] \cap \cG,  \label{eq:estimator}\quad 
			\text{where}  \quad  		\eta_\star \coloneqq \sqrt{\frac{5 \alpha\cdot(  \KL(\rhohat \| \uprho_0) +\ln \frac{N}{\delta})}{ 3 n \E_{h\sim \rhohat}	[\C_{\alpha}[\ell(h,X)]]}}
			\end{gather}
			is the unconstrained minimizer $\hat \eta$ of the RHS of \eqref{eq:goal2}. Since the loss $\ell$ has range in $[0,1]$, $\KL(\rhohat\| \uprho_0)\geq 0$, and $(\delta, n)\in]0,1/2[\times [2,+\infty[$, we have $\eta_\star\geq \sqrt{\alpha /n} \geq \min \cG$. This, with the fact that $\cG$ is in the form of a geometric progression with common ratio $2$ and $\max \cG=\alpha/2$, ensures the existence (and in fact the uniqueness) of $\hat\eta$ satisfying \eqref{eq:estimator}. 
		\paragraph{Case 1.} Suppose that $\eta_\star\leq \alpha/2$. In this case, the estimator $\hat\eta$ in \eqref{eq:estimator} satisfies $\eta_\star \leq  \hat\eta \leq 2 \eta_\star$. Plugging $\hat\eta$ into \eqref{eq:goal2} yields 
			\begin{align}
			\E_{h\sim \rhohat}	[\C_{\alpha}[\ell(h,X)]] - \what\C_{\alpha}[\what Z]  \leq    3\sqrt{\frac{3 \E_{h\sim \rhohat}	[\C_{\alpha}[\ell(h,X)]]\cdot  (\KL(\rhohat \| \uprho_0) +\ln \frac{N}{\delta})}{5 \alpha n}} +\what\C_{\alpha}[\what Z]  \cdot  \varepsilon_n.
			\end{align} 
			By applying Lemma~\ref{lem:conv} with $R = \E_{h\sim \rhohat}	[\C_{\alpha}[\ell(h,X)]]$,  $\what R  =\what\C_{\alpha}[\what Z]$, $A = \frac{27 (\KL(\rhohat \| \uprho_0) +\ln \frac{N}{\delta})}{5 \alpha n }$, and $B=\what\C_{\alpha}[\what Z] \cdot \varepsilon_n$,
			we get 
			\begin{align}
				\E_{h\sim \rhohat}	[\C_{\alpha}[\ell(h,X)]] - \what\C_{\alpha}[\what Z]& \leq    \sqrt{\frac{27 \what\C_{\alpha}[\what Z]\cdot  (\KL(\rhohat \| \uprho_0) +\ln \frac{N}{\delta})}{5 \alpha n}} + 2\what\C_{\alpha}[\what Z] \cdot \varepsilon_n\\  & \qquad  +  \frac{27(\KL(\rhohat \| \uprho_0) +\ln \frac{N}{\delta})}{5n\alpha }. \label{eq:middle}
			\end{align}
			\paragraph{Case 2.} Suppose now that $\eta_\star >\alpha/2$. In this case, $\hat\eta=\alpha/2$. Plugging this into \eqref{eq:goal2} and using the fact that $\eta_\star >\alpha/2$, yields
				\begin{align}
			\E_{h\sim \rhohat}	[\C_{\alpha}[\ell(h,X)]] - \what\C_{\alpha}[\what Z] \leq \frac{4(\KL(\rhohat \| \uprho_0)+ \ln \frac{N}{\delta})}{\alpha n} +  \what\C_{\alpha}[\what Z] \cdot \varepsilon_n.		\label{eq:lasty}	
				\end{align}
				Since $\what \C_{\alpha}[\what Z] \geq 0$ and $4 \leq 27/5$, the RHS of \eqref{eq:lasty} is less than the RHS of \eqref{eq:middle}, which completes the proof.
		\end{proof}
	
	\subsection{Proof of Lemma \ref{lem:lemmas}}
	\begin{proof}
		Suppose that $Z$ is $(\sigma,b)$-sub-exponential. Then, 
		\begin{align}
		\E[e^{\eta Z}] \leq e^{\frac{\eta^2 \sigma^2}{2}}, \quad \forall |\eta| \leq 1/b. \label{eq:inter0}
		\end{align}
		Using that $\E[q_\star\mid Z] \leq 1/\alpha$, and Cauchy-Schwartz, we get
		\begin{gather}
		\label{eq:inter}   
		|\eta Y| \leq  |\eta Z|/\alpha, \quad \forall \eta \in \reals, \\
		\shortintertext{and so, for all $|\eta|\leq \alpha/b$, we have}
		\E[e^{\eta Y}] \leq \E[e^{|\eta Y|}]\stackrel{\eqref{eq:inter}}{\leq} \E[e^{\frac{\eta Z}{\alpha}}] + \E[e^{-\frac{\eta Z}{\alpha}}] \stackrel{\eqref{eq:inter0}}{\leq} 2 e^{\frac{\eta^2 \sigma^2}{2 \alpha^2}}.  
		\end{gather}
		When $Z$ is  $\sigma$-sub-Gaussian case, the proof is the same, except that we replace $b$ by $0$.
	\end{proof}

			\subsection{Proof of Theorem \ref{thm:cvarconc}}
			\label{sec:proofconc}
				\begin{proof} Let $\cX=[0,1]$ and $f \equiv \op{id}$ be the identity map. By invoking Lemmas~\ref{lem:relation} and \ref{lem:conc} with $Z=f(X)=X$; and using \eqref{eq:ineqestimator} (which is a consequence of Lemma \ref{lem:newvar}), we get, for all $\eta \in[0,\alpha]$,
				\begin{align}
			\E_P \left[ \exp \left( n \eta \cdot \left( 	\C_{\alpha}[Z] - \what \C_{\alpha}[Z](1+\epsilon_n) - \frac{\eta \kappa({\eta/\alpha}) \C_{\alpha}[Z]}{\alpha} \right) \right) \right]\leq 1, \label{eq:intermediate}
				\end{align}
				with probability at least $1-\delta$, where $\epsilon_n$ is as in \eqref{eq:espilon}. By adding $ \C_{\alpha}[Z]\cdot \epsilon_n$ to both sides of \eqref{eq:intermediate} and using the fact that $\kappa(x)\leq 3/5$, for all $x\leq 1/2$, we get, for all $\eta\in[0,\alpha/2]$,
				\begin{align}
			\E_P \left[ \exp \left( n \eta \cdot \left( 	\C_{\alpha}[Z] - \what \C_{\alpha}[Z] - \left(\frac{3\eta  }{5 \alpha} +\epsilon_n\right) \C_{\alpha}[Z]   \right) \right) \right] \leq 1, \label{eq:midpoint}
				\end{align}
				with probability at least $1-\delta$. Let $W\coloneqq 	\C_{\alpha}[Z] - \what \C_{\alpha}[Z] - \left(\frac{3\eta  }{5 \alpha} +\epsilon_n\right) \C_{\alpha}[Z]$, and note that by \eqref{eq:midpoint}, we have \begin{align}P[\E_{P}[\exp ( n \eta  W )] \leq 1 ]\geq 1-\delta.
				\end{align}
				 Let $\mathcal{E}$ be the event that $\E_P[ \exp (n\eta W)] \leq  1$. With this, we have, for any $\delta \in(0,1)$ and all $\eta \in[0,\alpha/2]$,
				\begin{align}
				P\left[  	\C_{\alpha}[Z] - \what \C_{\alpha}[Z] \geq   \left(\frac{3\eta }{5 \alpha} +\epsilon_n\right) \C_{\alpha}[Z] + \frac{\ln \frac{1}{\delta}}{\eta n} \right]& = P\left[ e^{n \eta W} \geq \frac{1}{\delta}\right]\\ & = P\left[ \left.e^{n \eta W} \geq \frac{1}{\delta}  \right|  \mathcal{E} \right] \cdot P[\mathcal{E}] \\ & \quad  + P\left[ \left. e^{n \eta W} \geq \frac{1}{\delta}  \right|  \mathcal{E}^{\rm{c}} \right] \cdot (1-P[\mathcal{E}]),\\
				& \leq \delta \E[e^{n \eta W}\mid \mathcal{E}] + \delta, \label{eq:mid1}\\
				& \leq 2 \delta, \quad \quad (\text{by definition of $\mathcal{E}$}) \label{eq:mid2}
				\end{align}
				where \eqref{eq:mid1} follows by Markov's inequality and \eqref{eq:midpoint}. Now, we can re-express \eqref{eq:mid2} as 
				\begin{align}
								\C_{\alpha}[Z] - \what \C_{\alpha}[Z] \leq  \left(\frac{3\eta }{5 \alpha} +\epsilon_n\right) \C_{\alpha}[Z] + \frac{\ln \frac{1}{\delta}}{\eta n},\label{eq:reexpressed}
								\end{align}
				with probability at least $1-2\delta$. By setting $\eta= \sqrt{\frac{5 \alpha \ln \frac{1}{\delta}}{3 n\C_{\alpha}[Z]}}  \wedge \alpha/2$ (which does not depend on the samples), we get 
				\begin{align}
				\C_{\alpha}[Z] - \what \C_{\alpha}[Z] & \leq  \epsilon_n \C_{\alpha}[Z] +   \sqrt{ \frac{12  \C_{\alpha}[Z] \ln \frac{1}{\delta}}{5\alpha n}}\vee \frac{3\ln \frac{1}{\delta}}{\alpha n},
				\end{align}
				with probability at least $1-2\delta$. 
			\end{proof}

			\subsection{Proof of Theorem~\ref{thm:boundsunbounded}}
			\begin{proof}
				Let $\bar Z = Z-\E[Z]$. Suppose that $Z$ is $(\sigma,b)$-sub-exponential. In this case, by Lemma~\ref{lem:lemmas} the random variable $Y\coloneqq  \bar Z \cdot \E[q_\star \mid \bar Z]$ satisfies \eqref{eq:subexp}, and so by \citep[Theorem 2.19]{wainwright2019high}, we have 
				\begin{align}
				P\left[ \E[Y] - \frac{1}{n} \sum_{i=1}^n Y_i  \geq t  \right]  \leq \left\{  \begin{array}{cl} 2 e^{-\frac{n \alpha^2  t^2}{2 \sigma^2}}, & \text{if}\ 0 \leq  t\leq \frac{\sigma^2}{b \alpha};\\  2 e^{-\frac{n \alpha  t}{2 b}}  , & \text{if}\  t> \frac{\sigma^2}{b \alpha}.  \end{array}  \right. 	\label{eq:probY}
				\end{align}
				For any real random variables $A,B$, and $C$, we have $[A\geq C] \implies [A \geq B\ \text{or}\  B\geq C]$, and so $P[A\geq C] \leq P[A \geq B] +P[B\geq C]$. Applying this with $A =\C_{\alpha}[\bar Z]-  \what\C_{\alpha}[\bar Z] -|\bar Z_{(\ceil{n\alpha })}|  \epsilon_n$, $B=\E[Y]- \sum_{i=1}^n Y_i/n$, and $C=t\in \reals$. we get:
				\begin{align}
				P\left[\C_{\alpha}[\bar Z]-  \what\C_{\alpha}[\bar Z]- |\bar Z_{(\ceil{n\alpha })}|  \cdot \epsilon_n \geq t \right]& \leq P\left[\C_{\alpha}[\bar Z]-  \what\C_{\alpha}[\bar Z]-|\bar Z_{(\ceil{n\alpha })}|  \cdot \epsilon_n \geq \E[Y]- \frac{1}{n}\sum_{i=1}^n Y_i  \right] \\
				& \quad + P\left[ \E[Y]- \frac{1}{n}\sum_{i=1}^n Y_i \geq t \right], \\
				& \leq \delta +   \left\{  \begin{array}{cl} 2 e^{-\frac{n \alpha^2  t^2}{2 \sigma^2}}, & \text{if}\ 0 \leq  t\leq \frac{\sigma^2}{b \alpha};\\  2 e^{-\frac{n \alpha  t}{2 b}}  , & \text{if}\  t> \frac{\sigma^2}{b \alpha},  \end{array}  \right. \label{eq:prob}
				\end{align}
				where the last inequality follows by \eqref{eq:probY} and the fact that \eqref{eq:reduc} (with $Z$ replaced by $\bar Z$) holds with probability at least $1-\delta$. Since $\C_{\alpha}[Z]$ [resp. $\what\C_{\alpha}[Z]$] is a coherent risk measure, we have $\C_{\alpha}[\bar Z]= \C_{\alpha}[Z]-\E[Z]$ [resp. $\what\C_{\alpha}[\bar Z]= \what\C_{\alpha}[Z]-\E[Z]$], and so the LHS of \eqref{eq:prob} is equal to 
				\begin{align}
				P\left[\C_{\alpha}[Z]- \what\C_{\alpha}[Z] \geq t + |\bar Z_{(\ceil{n\alpha })}| \cdot \epsilon_n \right].
				\end{align}
				This with the fact that $\bar Z_{(\ceil{n\alpha })} = Z_{(\ceil{n\alpha })}  - \E[Z]$ completes the proof for the sub-exponential case. 
				
				When $Z$ is  $\sigma$-sub-Gaussian case, the proof is the same, except that we replace $b$ by $0$ and use the convention that $0/0=+\infty$.
			\end{proof}

\section{Beyond $\cvar$}
\label{app:gen}
First, we give a formal definition of a coherent risk measure (CRM):
\begin{definition}
	\label{def:coherent}
	We say that $\R\colon \mathcal{L}^1(\Omega) \rightarrow \reals \cup \{+\infty\}$ is a coherent risk measure if, for any $Z,Z' \in \mathcal{L}^1(\Omega)$ and $c\in \reals$, it satisfies the following axioms:
	(Positive Homogeneity) $\R[ \lambda Z] = \lambda \R[Z]$, for all $\lambda \in (0,1)$;
	(Monotonicity) $\R[Z] \leq \R[Z']$ if $Z\leq Z'$ a.s.;
	(Translation Equivariance) $\R[Z+c]= \R[Z]+ c$;
	(Sub-additivity) $\R[Z+ Z'] \leq \R[Z] + \R[Z']$. 
\end{definition} 

It is known that the conditional value at risk is a member of a class of CRMs called $\varphi$-entropic risk measures \cite{ahmadi2012entropic}. These CRMs are often used in the context of robust optimization \cite{namkoong2017}, and are perfect candidates to consider next in the context of this paper:
\begin{definition} Let $\varphi\colon [0,+\infty[\rightarrow\reals\cup \{+\infty\}$ be a closed convex function such that $\varphi(1)=0$. The $ \varphi$-entropic risk measure with divergence level $c$ is defined as 
	\begin{gather}
	\er^{c}_{\varphi}[Z]  \coloneqq \sup_{q \in \cQ_{\varphi}^{c}} \E_P[Zq], \ \  \text{where} \\ 
	\cQ_{\varphi}^c \coloneqq \left\{ q \in \mathcal{L}^1(\Omega) \left|\ \begin{matrix}  \exists Q \in \cM_P(\Omega)	,  q={\rmd  Q}/{\rmd P}, \\ \D_{\varphi} (Q \| P)  \leq c \end{matrix} \right.  \right\},  \label{eq:evar} 
	\end{gather}
	and $\D_{ \varphi}(\Q \| \P) \coloneqq \E_{\P}[ \varphi (q)]$ is the $\varphi$-divergence between two distributions $\Q$ and $\P$, where $Q\ll P$ and $q=\frac{\rmd Q}{\rmd P}$.
\end{definition}
As mentioned above, $\cvar_{\alpha}[Z]$ is a $\varphi$-entropic risk measure; in fact, it is the $\varphi$-entropic risk measure at level $c =0$ with $\varphi(\cdot) \coloneqq\iota_{ [0,1/\alpha]}(\cdot)$, where for a set $\mathcal{C} \subseteq \reals$, $\iota_{\mathcal{C}}(x)=0$ if $x\in \mathcal{C}$; and $+\infty$ otherwise \cite{ahmadi2012entropic}. 

The natural estimator $\what\er_{\varphi}^c[Z]$ of $\er_{\varphi}^c[Z]$ is defined by \cite{ahmadi2012entropic}
\begin{align}
\hspace{-0.1cm}\what\er_\varphi^c[Z] = \inf_{\nu >0, \mu \in \reals} \left\{ \mu + \nu \E_{\what P_n}\left[\varphi^\star\left(\frac{Z-\mu}{\nu} -c\right)\right] \right\}.
\end{align}
Extending the results of Lemmas \ref{lem:keylem} and \ref{lem:relation} comes down to finding an auxiliary estimator $\wtilde\er_{\varphi}^c[Z]$ of $\er_{\varphi}^c[Z]$ which satisfies (as in Lemma \ref{lem:relation}) $\wtilde\er_{\varphi}^c[Z] \leq \what\er_{\varphi}^c[Z]\cdot (1+\epsilon_n)$, for some ``small'' $\epsilon_n$, and 
\begin{align}
\frac{1}{n}\sum_{i=1}^n Z_i \cdot \E[q_\star \mid Z_i] \leq \wtilde\er_{\varphi}^c[Z],
\end{align} 
with high probability, where $q_\star \in \argmin_{q\in \cQ^c_{\varphi}} \E[Z q]$. The similarities between the expressions of $\what\er_{\varphi}^c[Z]$ and $\what\C_{\alpha}[Z]$ hint that it might be possible to find such an estimator by carefully constructing a set $\wtilde\cQ^c_{\varphi}$ to play the role of the $\wtilde\cQ_{\alpha}$ in Section \ref{sec:PACBayes}. We leave such investigations for future work.

\end{document}